\documentclass{article}

\usepackage{microtype}
\usepackage{graphicx}
\usepackage{subcaption}
\usepackage{booktabs} % for professional tables

\usepackage{hyperref}

\usepackage[accepted]{icml2026}

\usepackage{amsmath}
\usepackage{amssymb}
\usepackage{mathtools}
\usepackage{amsthm}

\usepackage[capitalize,noabbrev]{cleveref}

\theoremstyle{plain}
\newtheorem{theorem}{Theorem}[section]

\newtheorem{proposition}{Proposition}[section]

\theoremstyle{definition}
\newtheorem{assumption}{Assumption}[section]

\theoremstyle{remark}

\usepackage[disable,textsize=tiny]{todonotes}
\usepackage{mdframed}
\usepackage{pifont}
\usepackage{enumitem}
\definecolor{teal}{rgb}{0.0, 0.5, 0.5}

\usepackage{array}
\newcolumntype{I}{!{\vrule width 1pt}}
\usepackage[table]{xcolor}  % 支持 gray!20 / HTML 等
\usepackage{colortbl}       % 提供 
\usepackage{bm}
\usepackage{multirow}
\usepackage{arydshln}
\newcommand{\lora}{LoRA}

\newcommand{\ourmethod}[2]{\textbf{Ours}}

\makeatletter
\newcommand{\thickhline}{%
    \noalign {\ifnum 0=`}\fi \hrule height 1pt
    \futurelet \reserved@a \@xhline
}
\hypersetup{
    pagebackref,
    breaklinks,
    colorlinks,
    citecolor=teal,   % 引用文献变成蓝绿色 [1]
    urlcolor=blue,    % 网址变成蓝色 
    linkcolor=black   % 公式、图表跳转变成黑
}
\usepackage{tcolorbox}  % 加载tcolorbox 宏包
\tcbuselibrary{theorems}
\icmltitlerunning{Shift-Dependent Asymmetry: Orthogonal Inverse LoRA for Federated Medical Segmentation}

\begin{document}

\twocolumn[
  \icmltitle{Shift-Dependent Asymmetry: Orthogonal Inverse Low-Rank Adaptation for Federated Medical Segmentation}

  % It is OKAY to include author information, even for blind submissions: the
  % style file will automatically remove it for you unless you've provided
  % the [accepted] option to the icml2026 package.

  % List of affiliations: The first argument should be a (short) identifier you
  % will use later to specify author affiliations Academic affiliations
  % should list Department, University, City, Region, Country Industry
  % affiliations should list Company, City, Region, Country

  % You can specify symbols, otherwise they are numbered in order. Ideally, you
  % should not use this facility. Affiliations will be numbered in order of
  % appearance and this is the preferred way.
  \icmlsetsymbol{equal}{*}

  \begin{icmlauthorlist}
    \icmlauthor{Xingyue Zhao}{equal,pumch,ia}
    \icmlauthor{Wenke Huang}{equal,ntu}
    \icmlauthor{Linghao Zhuang}{equal,cams,ia}
    \icmlauthor{Haoran Wu}{ia}
    \icmlauthor{Anwen Jiang}{xju}
    \icmlauthor{Zhifeng Wang}{ant}
    \icmlauthor{Wenwen He}{whu}
    \icmlauthor{Ming Feng}{pumch}
    \icmlauthor{Mang Ye}{whu}
    \icmlauthor{Bo Xu}{ia}
  \end{icmlauthorlist}

  \icmlaffiliation{pumch}{Department of Neurosurgery, Peking Union Medical College Hospital, Chinese Academy of Medical Sciences and Peking Union Medical College}
  \icmlaffiliation{ia}{The Key Laboratory of Cognition and Decision Intelligence for Complex Systems, Institute of Automation, Chinese Academy of Sciences}
  \icmlaffiliation{ntu}{College of Computing and Data Science, Nanyang Technological University, Singapore}
  \icmlaffiliation{cams}{Institute of Basic Medical Sciences, Chinese Academy of Medical Sciences and Peking Union Medical College, Beijing, China}
  \icmlaffiliation{xju}{Xinjiang University}
  \icmlaffiliation{ant}{Ant Group, China}
  \icmlaffiliation{whu}{Wuhan University}

  \icmlcorrespondingauthor{Haoran Wu}{wuhaoran2018@ia.ac.cn} % TODO: fill in real email
  \icmlcorrespondingauthor{Ming Feng}{fengming@pumch.cn} % TODO: fill in real email
  \icmlcorrespondingauthor{Bo Xu}{xubo@ia.ac.cn} % TODO: fill in real email

  % You may provide any keywords that you find helpful for describing your
  % paper; these are used to populate the "keywords" metadata in the PDF but
  % will not be shown in the document
  \icmlkeywords{Machine Learning, ICML}

  \vskip 0.3in
]

% this must go after the closing bracket ] following \twocolumn[ ...

% This command actually creates the footnote in the first column listing the
% affiliations and the copyright notice. The command takes one argument, which
% is text to display at the start of the footnote. The \icmlEqualContribution
% command is standard text for equal contribution. Remove it (just {}) if you
% do not need this facility.

% Use ONE of the following lines. DO NOT remove the command.
% If you have no special notice, KEEP empty braces:
\printAffiliationsAndNotice{\icmlEqualContribution}  % first two authors contributed equally

\begin{abstract}
Low-Rank Adaptation (LoRA) enables efficient federated fine-tuning of segmentation foundation models for medical imaging. However, most federated LoRA methods adopt a uniform aggregation rule, which breaks under the encoder–decoder asymmetry in medical segmentation: the encoder is dominated by appearance shifts, while the decoder is dominated by supervision variations. This mismatch entangles shared anatomy with site-specific biases and harms generalization. To address this, we propose Inverse Asymmetric Tuning (IAT). IAT aligns adaptation with heterogeneity sources by personalizing module-specific components in the encoder to absorb appearance shifts and in the decoder to accommodate site-dependent supervision, while retaining a shared pathway for transferable consensus. However, structural separation alone is insufficient under LoRA’s bilinear parameterization, where multiplicative coupling can still cause site-specific updates to leak into the shared direction. We therefore introduce a Subspace Orthogonality Regularizer that penalizes shared–local collinearity in the effective update space, mitigating leakage without extra communication. Experiments show consistent improvements over strong federated LoRA and parameter-efficient FL baselines.
\end{abstract}

\begin{figure}[t]
	\centering
	\includegraphics[width=\linewidth]{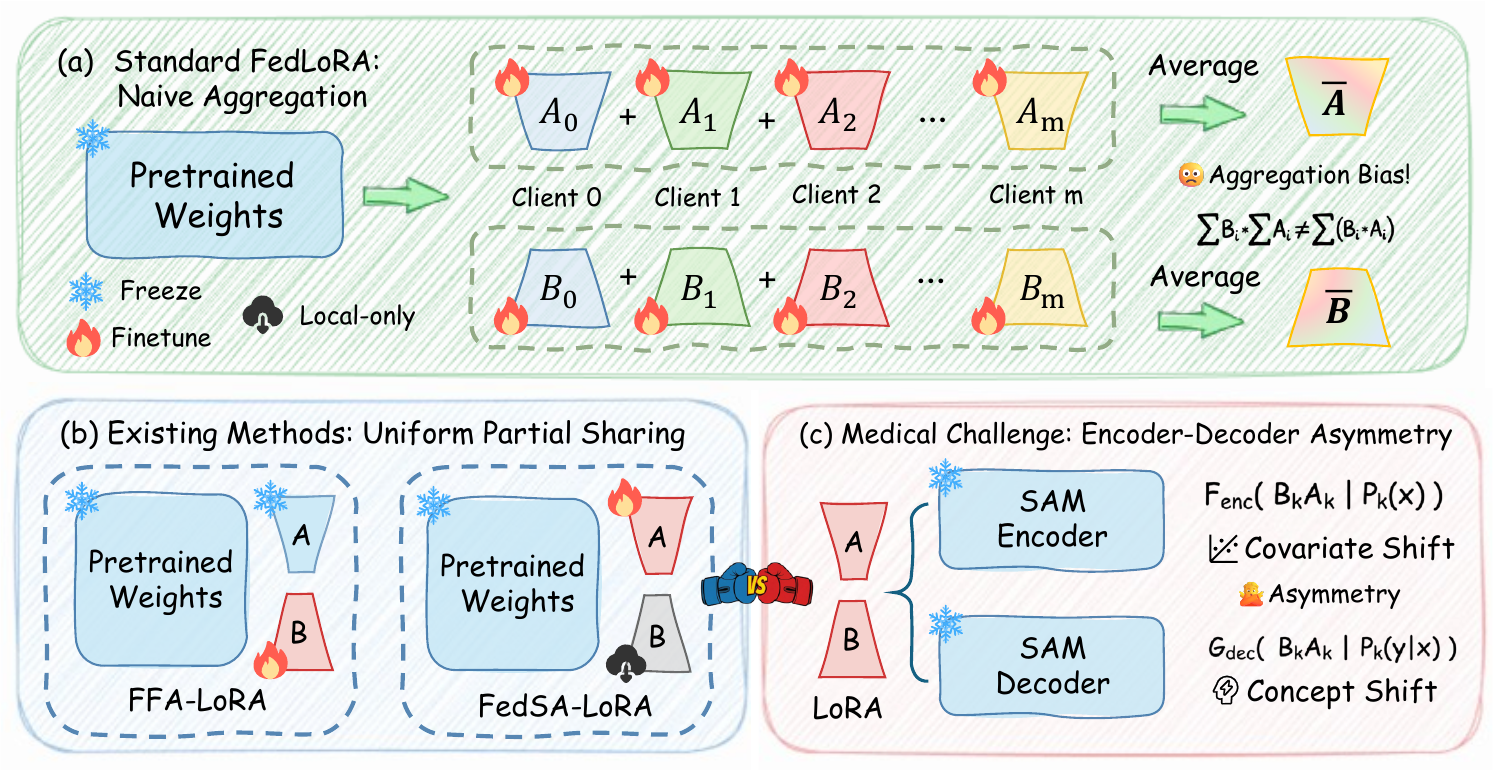} 
	% \caption{\small Overview of the proposed personalized federated SAM framework. Left: Process flow of our framework. Right: Detailed architecture of SAM and the L-MoE structure.}
    \caption{\small \textbf{Illustration of the motivation.} 
(a) \textbf{Standard FedLoRA} employs naive aggregation of all parameters, suffering from high communication costs and severe client drift. 
(b) \textbf{Existing Methods} (e.g., uniform partial sharing) enforce a static splitting rule across the entire network, overlooking the distinct roles of different modules. 
(c) \textbf{Medical Challenge: Encoder-Decoder Asymmetry.} We identify a critical structural misalignment: the encoder is primarily dominated by \textit{covariate shift} (appearance variations), while the decoder is dominated by \textit{concept shift} (supervision targets).}
	\label{motivation}
    \vspace{-1.5em}
\end{figure}

\section{Introduction}
Medical image segmentation is fundamental to clinical decision-making, ranging from organ delineation to tumor assessment~\cite{cao2023large,hu2025ai,zhu2026medeyes}. Developing robust segmentation models typically requires diverse data from multiple centers~\cite{hu2025ai_1,zhu2025pathology}; however, aggregating patient images is often restricted by strict privacy regulations and data governance policies~\cite{jiang2022harmofl}. Federated Learning (FL)~\cite{FedAvg_AISTATS17,huang2026federated} addresses this challenge by enabling collaborative training across institutions without exchanging raw data. However, the inherent data heterogeneity across sites often limits the performance of conventional lightweight models. To address this, there is a growing trend to deploy Segmentation Foundation Models (e.g., SAM)~\cite{kirillov2023segment}, as their robust pre-trained representations offer superior generalization against domain shifts~\cite{liu2024fedfms,zhang2026dsfedmed}.

Yet, fine-tuning such massive backbones in federated settings imposes prohibitive communication and computational burdens on local computing resources. To make collaborative training feasible, Parameter-Efficient Fine-Tuning (PEFT) has become indispensable. In particular, Low-Rank Adaptation (LoRA)~\cite{hu2022lora} has emerged as the dominant strategy for federated learning~\cite{yi2023pfedlora, zhang2024towards,liaosplitting}, as it significantly reduces transmission overhead by updating only lightweight low-rank matrices while freezing the pre-trained backbone. Accordingly, we center our investigation on the Federated LoRA paradigm.

However, standard aggregation of LoRA weights suffers from an inherent inconsistency due to its bilinear parameterization ($\Delta W = B A$). Since matrix multiplication is non-linear, averaging decomposed factors on the server generally fails to reconstruct the average of the effective updates. This introduces a coupling term that biases the global model:
% \begin{equation}\small
% \setlength\abovedisplayskip{0pt}\setlength\belowdisplayskip{0pt}
% \begin{aligned}
% \bar B\,\bar A
% &= \frac{1}{K}\sum_{k=1}^K B_kA_k \;+\; \mathcal{E},\\
% \mathcal{E}
% &:= \frac{1}{K}\sum_{k=1}^K (B_k-\bar B)(A_k-\bar A),
% \end{aligned}
% \end{equation}
% \begin{equation}\small
% \setlength\abovedisplayskip{0pt}\setlength\belowdisplayskip{0pt}
% \bar B\,\bar A = \frac{1}{K}\sum_{k=1}^K \bigl[ B_kA_k + (B_k-\bar B)(A_k-\bar A) \bigr]
% \end{equation}
\begin{equation}\small
\setlength\abovedisplayskip{0pt}\setlength\belowdisplayskip{0pt}
\bar B\,\bar A = \frac{1}{K}\sum_{k=1}^K \Bigl[ B_kA_k + \tcbhighmath[colback=green!8, colframe=white, boxrule=0pt, arc=0pt, left=0mm, right=0mm, top=0mm, bottom=0mm]{(B_k-\bar B)(A_k-\bar A)} \Bigr]
\end{equation}
% The last term represents the interference caused by conflicting local updates. In non-IID scenarios, this deviation becomes significant, leading to biased global weights. 
Here, the term $(B_k-\bar B)(A_k-\bar A)$ represents the interference caused by conflicting local updates. In non-IID scenarios, this deviation becomes significant, leading to biased global weights. To mitigate this, recent Federated LoRA approaches aim to isolate these conflicts by retaining specific parameters locally. Current methods implement this by freezing one factor (e.g., ~\cite{zhang2023lora}) or by selectively sharing specific matrices (e.g., ~\cite{guo2024selective, wang2025pfedsam}). Yet, these methods typically enforce a uniform splitting rule across the entire network. This rigid strategy fails to address the asymmetric heterogeneity in medical segmentation. Mathematically, the LoRA update $\Delta W_k = B_k A_k$ is driven by structurally distinct statistical dependencies across modules:
% \begin{equation}\small
% \setlength\abovedisplayskip{0pt}\setlength\belowdisplayskip{0pt}
% \begin{aligned}
% \Delta W_k \sim \begin{cases} \mathcal{F}_{enc}\big(B_k A_k \mid P_k(\mathbf{x})\big) & \text{Encoder (Covariate Shift)} \\ \mathcal{G}_{dec}\big(B_k A_k \mid P_k(\mathbf{y}|\mathbf{x})\big) & \text{Decoder (Concept Shift)} \end{cases}
% \end{aligned}
% \end{equation}
\begin{equation}\small
\setlength\abovedisplayskip{0pt}\setlength\belowdisplayskip{0pt}
\begin{aligned}
\Delta W_k \sim \begin{cases} 
% 第一行：Encoder (使用淡灰/默认色)
\mathcal{F}_{enc}\big(\tcbhighmath[colback=yellow!10, colframe=white, boxrule=0pt, arc=0pt, left=0mm, right=0mm, top=0mm, bottom=0mm]{B_k A_k \mid P_k(\mathbf{x})}\big) & \text{Encoder (Covariate Shift)} \\ 
% 第二行：Decoder (使用淡绿色 green!8)
\mathcal{G}_{dec}\big(\tcbhighmath[colback=green!8, colframe=white, boxrule=0pt, arc=0pt, left=0mm, right=0mm, top=0mm, bottom=0mm]{B_k A_k \mid P_k(\mathbf{y}|\mathbf{x})}\big) & \text{Decoder (Concept Shift)} 
\end{cases}
\end{aligned}
\end{equation}
The encoder LoRA is primarily sensitive to acquisition shifts (changes in input distribution $P(\mathbf{x})$), whereas the decoder LoRA confronts supervision variations (changes in conditional distribution $P(\mathbf{y}|\mathbf{x})$). Applying a uniform split ignores these distinct statistical roles, leaving site-specific biases entangled with shared anatomical knowledge. To this end, we aim to develop a structure-aware tuning framework that moves beyond homogeneous constraints. This raises the following challenge:
% \colorbox{mydarkblue!50}{\parbox{0.98\textwidth}{
% \textbf{\textit{Challenge}} \hypertarget{Q1}{\textbf{\uppercase\expandafter{\romannumeral1})}} \textit{\textbf{How to effectively inject multi-task knowledge into the low-rank subspace?}}
% }}
\definecolor{cadetblue}{RGB}{95,158,160} % Define CadetBlue color
\definecolor{keywordcolor}{RGB}{178,34,34} % Define FireBrick color for keywords
\definecolor{mydarkblue}{RGB}{211, 218, 234}

\begin{mdframed}[backgroundcolor=cadetblue!10, linewidth=0.8pt, linecolor=cadetblue!80, roundcorner=5pt]
\textit{\textcolor{keywordcolor}{How to structurally disentangle the adaptation strategy to reconcile opposing heterogeneity shifts?}}
\end{mdframed}
% Instead of enforcing a uniform splitting rule across the entire network~\cite{zhang2023lora,guo2024selective,wang2025pfedsam}, we model the encoder as being dominated by covariate (appearance) shift and the decoder by concept (supervision) shift. Our theoretical analysis (Proposition 1, Section 3) reveals a shift-dependent preference: minimizing the reconstruction error under covariate shift favors localizing the input-side factor ($A$), whereas concept shift favors localizing the output-side factor ($B$). We empirically validate this inversion by measuring the inter-client similarity of LoRA matrices across the encoder and decoder modules. The results, illustrated in Figure 2, confirm a distinct "crossover" pattern: in upstream encoder blocks, $A$ exhibits markedly lower similarity than $B$, while in downstream decoder blocks, the trend reverses. Motivated by this crossover, we propose Inverse Asymmetric Tuning (IAT). IAT establishes a structure-aware calibration mechanism. 
Instead of enforcing a uniform splitting rule across the entire network~\cite{zhang2023lora,guo2024selective,wang2025pfedsam}, we distinguish the roles of the encoder and decoder, modeling the former as being dominated by covariate (appearance) shift and the latter by concept (supervision) shift. Our theoretical analysis (Proposition 1, Section 3) formally derives a shift-dependent preference: minimizing the reconstruction error under covariate shift necessitates localizing the input-side factor ($A$), whereas concept shift requires localizing the output-side factor ($B$). Guided by this theoretical derivation, we propose Inverse Asymmetric Tuning (IAT) to explicitly align the tuning strategy with these distinct structural requirements. Empirical results, illustrated in Figure \ref{m}, corroborate our theoretical analysis, confirming that this structure-aware calibration significantly outperforms uniform baselines. Guided by this analysis, IAT implements an inverse allocation strategy. Specifically, it selectively personalizes the input-projection factors ($A$) in the encoder to absorb covariate (acquisition) shifts, while personalizing the output-mapping factors ($B$) in the decoder to adapt to concept (supervision) shifts. This design explicitly aligns the parameter optimization focus with the dominant heterogeneity source of each module.

% Instead of a static rule, it enforces an inverse allocation paradigm that selectively personalizes the input-projection factors ($A$) in the encoder to filter acquisition shifts, while personalizing the output-mapping factors ($B$) in the decoder to align with supervision variations. This explicitly couples the parameter optimization landscape with the dominant heterogeneity source of each module. 

However, structural separation alone does not guarantee optimization independence. While IAT correctly allocates the parameters, the bilinear dependency of LoRA ($\Delta W = BA$) induces an intrinsic coupling during training. As gradients backpropagate through this product, the update direction of the shared matrix is inevitably modulated by the local matrix, creating a channel for implicit leakage. Consequently, the distinct subspaces defined by IAT may gradually collapse, allowing site-specific biases to contaminate the shared global model. This raises the second challenge:
% \vspace{-em}
\begin{mdframed}[backgroundcolor=cadetblue!10, linewidth=0.8pt, linecolor=cadetblue!80, roundcorner=5pt]
\textit{\textcolor{keywordcolor}{How to ensure the functional independence of the decoupled subspaces to prevent optimization interference?}}
\end{mdframed}
% \vspace{-0.8em}
To address this, we analyze the gradient dynamics of the bilinear interaction (Proposition 2, Section 3). Our analysis demonstrates that without constraints, the shared and local updates naturally tend toward collinearity. To counteract this, we propose the Subspace Orthogonality Regularizer (SOR). SOR imposes a soft geometric constraint that forces the shared and local subspaces to remain orthogonal. This mechanism rectifies the gradient flow, ensuring that the shared model aggregates only universal representations while site-specific variations remain strictly isolated. Our main contributions are summarized as follows:
\begin{itemize}[leftmargin=*, nosep]
\item[{\ding{182}}] \textbf{Re-examining LoRA Allocation in FL Medical Segmentation.} Our findings indicate that the prevalent uniform splitting strategy exhibits a severe structural mismatch in medical segmentation. We reveal that the optimal parameter sharing preference is not static, but structurally inverts between the encoder and decoder due to their opposing heterogeneity patterns.
\item[{\ding{183}}] \textbf{Novel Structure-Aware Framework for Dual Decoupling.} Building on the phenomenon of allocation inversion, we propose a unified framework to ensure comprehensive disentanglement. We effectively address the structural misalignment and optimization coupling inherent in federated medical LoRA training.
\item[{\ding{184}}] \textbf{Theoretical Guarantees and Experimental Validation.} We provide theoretical guarantees for the shift-dependent allocation preference and the necessity of orthogonality constraints. Extensive experiments demonstrate the effectiveness and robustness of our framework compared to state-of-the-art methods.
\end{itemize}

\paragraph{Conflict of Interest Disclosure.}
The authors declare no financial conflicts of interest related to this work.

\section{Related Works}
\noindent\textbf{Data Heterogeneity in Federated Learning.}
Federated learning (FL) enables collaborative training without sharing raw data, yet real-world deployments typically suffer from non-i.i.d.\ distributions~\cite{huang2022learn,hsieh2020non}. This statistical heterogeneity manifests primarily in three forms: label distribution skew~\cite{kairouz2021advances}, feature distribution skew~\cite{FPL_CVPR23}, and quantity skew~\cite{wang2020tackling,FedProx_MLSys2020}. Among these, we specifically focus on \textit{feature skew} (appearance shift). In this setting, clients share a consistent label space but differ substantially in input distributions due to variations in acquisition devices or environmental conditions, leading to severe representation drift.
Strategies to mitigate such heterogeneity broadly fall into two paradigms: \textit{optimization enhancement} and \textit{personalized adaptation}. The former stabilizes global training by rectifying local optimization dynamics, for instance, by restricting client drift via proximal regularization~\cite{FedProx_MLSys2020} or correcting update bias using control variates~\cite{MOON_CVPR21}. However, since enforcing a single global model often proves insufficient under severe feature skew, the latter paradigm pursues personalized federated learning (PFL)~\cite{yang2024fedas}. These approaches abandon the one-model-fits-all constraint by structurally isolating domain-specific components, such as decoupling prediction heads~\cite{collins2021exploiting} or maintaining local normalization statistics~\cite{li2021fedbn}, thereby allowing effective adaptation to diverse local distributions.
Such feature heterogeneity is particularly pronounced in medical image segmentation, where diverse acquisition protocols create distinct visual styles that degrade model performance~\cite{jiang2022harmofl,zhu2024advancing,shao2025rethinking,zhao2026divide}. Recently, foundation models like Segment Anything (SAM)~\cite{kirillov2023segment,zhao2024sam} have emerged as powerful backbones due to their robust general-purpose representations~\cite{liu2024fedfms,asokan2024federated}. However, integrating these large models into federated pipelines presents a dual challenge: achieving parameter-efficient adaptation under strict communication constraints while robustly handling significant cross-site appearance shifts~\cite{wang2025pfedsam,hu2025federated}.

\noindent\textbf{LoRA in Federated Learning.} Foundation models have recently become a strong backbone choice in federated learning~\cite{fan2023fate}, but full-parameter fine-tuning is often impractical due to the high communication and on-device training cost~\cite{xu2024fwdllm}. Parameter-efficient fine-tuning (PEFT) addresses this by freezing the backbone and updating only a small parameter subset~\cite{houlsby2019parameter,liu2022few,hu2024learn}. Notably, Low-Rank Adaptation (LoRA)~\cite{hu2022lora} is widely favored in FL for its communication efficiency, achieved by transmitting only lightweight low-rank factors~\cite{yi2023pfedlora,liu2025differentially,long2024dual}. However, the bilinear nature of LoRA ($\Delta W = BA$) complicates aggregation: simply averaging local factors at the server fails to reconstruct the average effective update, a discrepancy that worsens under non-i.i.d.\ data~\cite{sun2024improving}. To mitigate this, distinct aggregation methods have been proposed. One stream employs \textit{asymmetric sharing}~\cite{guo2024selective,zhang2023lora} or \textit{residual reparameterization}~\cite{yanfederated}, selectively aggregating factors or restructuring updates to minimize cross-client interference. Another stream focuses on \textit{adaptive configurations}, such as importance-aware splitting rules~\cite{liaosplitting,zhao2025fedlora} or heterogeneous rank allocations~\cite{cho2024heterogeneous,chen2024rbla,wang2024flora}, to handle diverse client constraints. It is worth noting that most current Federated LoRA strategies are developed for decoder-only Large Language Models (LLMs). Conversely, medical segmentation employs encoder--decoder architectures (e.g., SAM) with dense pixel-level supervision, creating distinct structural constraints. Consequently, directly applying LLM-centric protocols may be suboptimal, particularly as heterogeneity tends to affect the encoder and decoder modules in different ways.
\begin{figure*}[t]
% 	\vspace{-4pt}
	\begin{center}
    \includegraphics[width=0.95\linewidth]{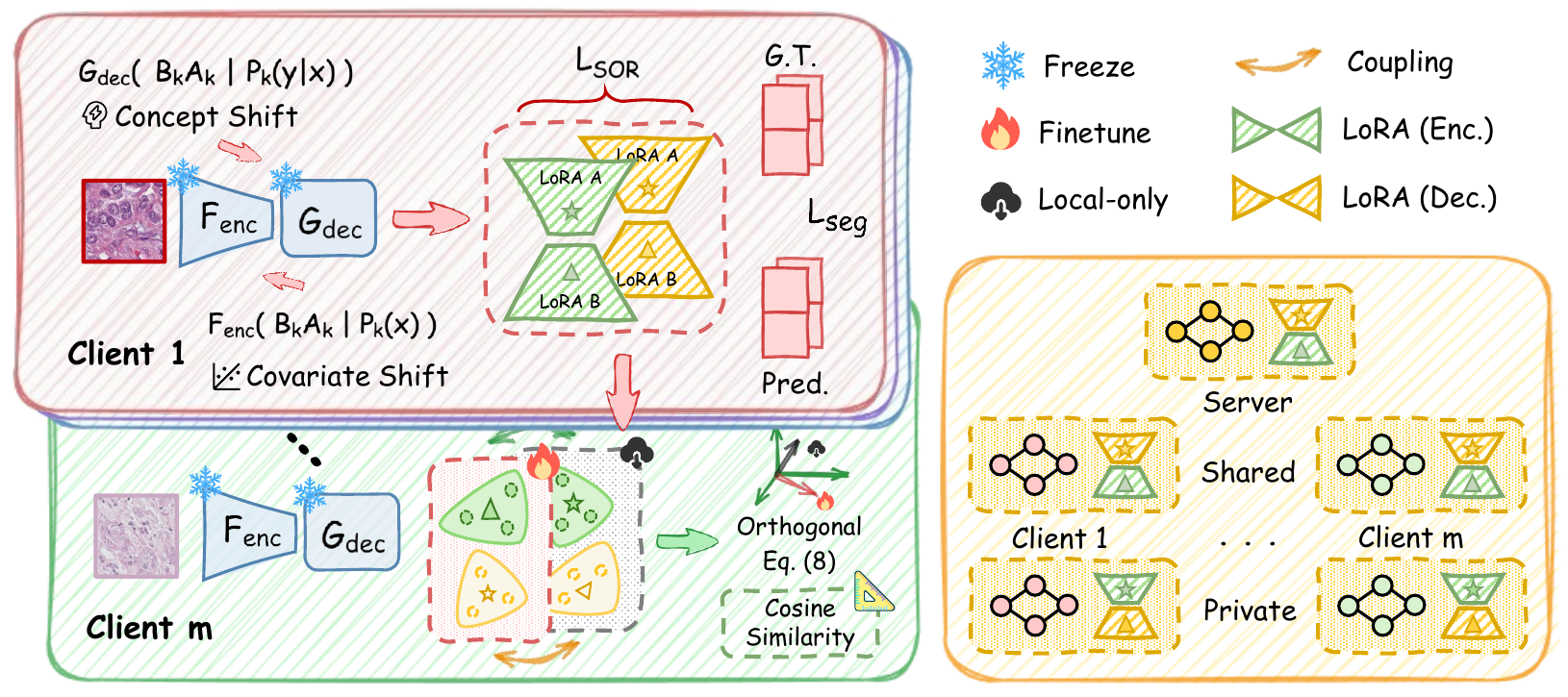}
		% \put(-109,93){\scriptsize\cref{eq:celoss}}
		% \put(-124,55){\scriptsize\cref{eq:cpclloss}}
		% \put(-68,25){\scriptsize\cref{eq:upcrloss}}
	\end{center}
	% \vspace{-5pt}
	\captionsetup{font=small}
    \caption{\small\textbf{Architecture illustration} of the proposed framework. To address the structural misalignment between covariate and concept shifts, we implement \textbf{Inverse Asymmetric Tuning (IAT)} which allocates trainable LoRA factors inversely across the encoder and decoder. To prevent the coupling of shared knowledge with local biases, we apply a \textbf{Subspace Orthogonality Regularizer (SOR)} during local updates. The server aggregates only the shared components to derive a generalized global model while preserving local personalization. Zoom in for details.}
	\label{fig:framework}
	% \vspace{-15pt}
\end{figure*}

\section{Methodology}
\subsection{Preliminary} 
\noindent\textbf{Federated Medical Segmentation.} We consider a federated segmentation task with $K$ clients. Each client $k$ holds a private dataset $\mathcal{D}_k = \{(x_i, y_i)\}_{i=1}^{N_k}$ drawn from a distinct distribution $\mathcal{P}_k$. The goal is to minimize the global objective $\min_{\Theta} \sum_{k=1}^K p_k \mathcal{L}_k(\Theta)$.
We formulate the segmentation network $\mathcal{F}$ as a composition of an Encoder ($\mathcal{E}$) and a Decoder ($\mathcal{D}$), i.e., $\mathcal{F} = \mathcal{D} \circ \mathcal{E}$.
This architecture faces dual heterogeneity in medical imaging:
(i) Acquisition Shift ($P_k(x)$), caused by varying scanner protocols, primarily affects the Encoder which extracts low-level features;
(ii) Supervision Shift ($P_k(y|x)$), caused by distinct annotation standards, primarily affects the Decoder which maps features to semantic predictions.

\noindent\textbf{Low-Rank Adaptation (LoRA).} To efficiently adapt the pre-trained parameters $W_0 \in \mathbb{R}^{d_{out} \times d_{in}}$ (in both $\mathcal{E}$ and $\mathcal{D}$), LoRA introduces a low-rank update $\Delta W = B A$, where $B \in \mathbb{R}^{d_{out} \times r}$ and $A \in \mathbb{R}^{r \times d_{in}}$ ($r \ll d_{in}, d_{out}$). The forward pass is:
\begin{equation}
\setlength\abovedisplayskip{0pt}\setlength\belowdisplayskip{0pt}
h = W_0 x + B A x,
\label{eq:lora}
\end{equation}
where $x$ denotes the input to the layer. Here, $A$ acts as the input projector compressing information into the rank-$r$ subspace, while $B$ acts as the output projector reconstructing it back to the functional dimension.
\subsection{Proposed Method}
To address the critical issues of \textbf{structural mismatch} in parameter allocation and \textbf{implicit optimization coupling} under dual heterogeneity, we propose a dual-decoupling framework for federated medical segmentation. As illustrated in Figure~\ref{fig:framework}, FedIAT effectively disentangles general and site-specific knowledge by integrating \textbf{Inverse Asymmetric Tuning (IAT)} to structurally align LoRA factors with module-specific shifts, and a \textbf{Subspace Orthogonality Regularizer (SOR)} to functionally suppress gradient leakage during training. This synergy ensures that global knowledge and site-specific priors are decoupled in both structure and optimization dynamics.

\subsubsection{Inverse Asymmetric Tuning}
\noindent\textbf{Motivation.} \textit{The optimal parameter allocation strategy in federated segmentation is not uniform but structurally inverts depending on the dominant heterogeneity type.}
Existing PFL-LoRA methods typically enforce a uniform splitting rule, ignoring the structural asymmetry in medical segmentation. The encoder is primarily exposed to acquisition shifts (input domain), whereas the decoder faces supervision shifts (output domain). To rigorously explain why this necessitates a flexible strategy, we analyze the approximation error of a linear surrogate layer.
\begin{proposition}[Shift-Dependent Preference]
\label{proposition1}
Consider a linear surrogate layer $y = (W_0 + BA)x$ with rank $r$.
(i) Under Covariate Shift (input subspace rotation $x_k = R_k x_{gen}$), minimizing the approximation error favors sharing $B$ and personalizing $A_k$ to align the client-specific input row space.
(ii) Under Concept Shift (target mapping rotation $y_k = T_k y_{gen}$), minimizing the error favors sharing $A$ and personalizing $B_k$ to align the client-specific output column space.
\end{proposition}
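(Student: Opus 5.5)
The argument first makes the statement precise and then does a direct algebraic comparison for each shift. Take a generic ideal update $\Delta W^\star$ of rank at most $r$ with factorization $\Delta W^\star = B^\star A^\star$, learned on the reference distribution. Under covariate shift, client $k$ sees $x_k = R_k x_{gen}$ with orthogonal $R_k$. The target map must still send $x_{gen}$ to the generic output, so the client-optimal update is $\Delta W_k^\star = B^\star A^\star R_k^\top$ (up to the $W_0$ correction, which we absorb by assuming $W_0$ is already equivariant or by studying only the residual). Under concept shift, $y_k = T_k y_{gen}$ with orthogonal $T_k$, and the client-optimal update is $\Delta W_k^\star = T_k B^\star A^\star$. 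The approximation error of a sharing scheme is $\sum_k \|\Delta W_k - \Delta W_k^\star\|_F^2$, or equivalently the expected output error $\mathbb{E}\|(\Delta W_k - \Delta W_k^\star)x\|^2$ under isotropic $x$. We compare two schemes: (S-A), with shared $B$ and local $A_k$, and (S-B), with shared $A$ and local $B_k$.

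For part (i), the key step is to exhibit zero error for (S-A). Choosing $B = B^\star$ and $A_k = A^\star R_k^\top$ gives $BA_k = \Delta W_k^\star$ exactly. Any rotation of the input space acts only on the row space of $\Delta W$, and the row space is carried entirely by $A$. For (S-B), the update $B_k A$ has row space equal to the fixed $\mathrm{row}(A)$ for every $k$. But $\mathrm{row}(\Delta W_k^\star) = R_k\,\mathrm{row}(A^\star)$, which changes with $k$ whenever the $R_k$ do not preserve $\mathrm{row}(A^\star)$. We lower bound the error by projection. Let $P$ be the projector onto $\mathrm{row}(A)$. Then
\[
\|B_kA - \Delta W_k^\star\|_F^2 \ge \|\Delta W_k^\star (I-P)\|_F^2 .
\]
Summing over $k$ and minimizing over rank-$r$ projectors $P$ gives a strictly positive bound. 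This bound is a tail eigenvalue sum of $\sum_k R_k A^{\star\top} B^{\star\top} B^\star A^\star R_k^\top$ beyond its top $r$, via Eckart--Young. It is positive exactly when the rotated row spaces do not all coincide. So (S-A) weakly dominates (S-B), and strictly does so under nontrivial covariate shift.

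Part (ii) is the transpose argument. Left multiplication by $T_k$ acts only on the column space, which is carried by $B$. Setting $A = A^\star$ and $B_k = T_k B^\star$ gives zero error, while (S-A) is confined to the fixed column space $\mathrm{col}(B)$. The corresponding lower bound is $\|(I-Q)\Delta W_k^\star\|_F^2$, where $Q$ projects onto $\mathrm{col}(B)$. I would state (ii) as a corollary obtained by transposing the layer, since $y^\top = x^\top \Delta W^\top$ swaps the roles of $A$ and $B$.

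The main obstacle is the modeling step, not the algebra. We must justify that the client optimum is exactly $\Delta W^\star R_k^\top$, which requires handling $W_0 x_k = W_0 R_k x_{gen}$; a mismatch there contributes an additional full-rank term that LoRA cannot absorb. We must also make "favors" precise when the shifts are generic. The cleanest fix is to define error relative to the best achievable rank-$r$ residual and to assume $R_k$, $T_k$ act within a subspace compatible with rank $r$. A remark should note that the strict gap requires $\{R_k\,\mathrm{row}(A^\star)\}_k$ (resp. $\{T_k\,\mathrm{col}(B^\star)\}_k$) not to span a common $r$-dimensional subspace.
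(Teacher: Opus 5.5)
Your proposal is correct, and its constructive half matches the paper's. The paper starts from the SVD $W^*=U\Sigma V^\top$ of a general, possibly full-rank, oracle weight. It sets $B=U_r\Sigma_r^{1/2}$ and $A_k=\Sigma_r^{1/2}(R_kV_r)^\top$, so that $BA_k$ is the Eckart--Young optimum of $W^*R_k^\top$ for every client. You instead assume a realizable target of rank at most $r$ and get zero error with $B=B^\star$, $A_k=A^\star R_k^\top$.

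The suboptimality half is where you genuinely differ. The paper gives a qualitative impossibility argument: exact optimality with a shared $A$ would force $\mathrm{Row}((W_k^*)_r)\subseteq\mathrm{Row}(A)$ for all $k$, which fails once the rotated row spaces jointly span more than $r$ dimensions. Turning this into a strict error gap relies on uniqueness of the truncated SVD, i.e.\ $\sigma_r>\sigma_{r+1}$. Your route is quantitative. Your projection bound, with Eckart--Young applied to the vertically stacked matrix of the $\Delta W_k^\star$, gives the exact optimal error of the shared-$A$ scheme as the tail eigenvalue sum of $M=\sum_k(\Delta W_k^\star)^\top\Delta W_k^\star$. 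This yields an explicit gap and an if-and-only-if condition for strictness.

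Your projection argument also removes the realizability assumption. Build $M$ from $W^*R_k^\top$ instead. Then the shared-$B$ scheme attains $K\sum_{i>r}\sigma_i^2$, and the shared-$A$ scheme attains $\mathrm{tr}(M)-\sum_{i\le r}\lambda_i(M)$. Ky Fan's inequality $\sum_{i\le r}\lambda_i(M)\le K\sum_{i\le r}\sigma_i^2$ shows the shared-$A$ error is never smaller.

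Two minor points. First, your claim that the bound is ``positive exactly when the rotated row spaces do not all coincide'' holds only when $\mathrm{rank}(\Delta W^\star)=r$. In general the condition is $\dim\sum_k R_k\,\mathrm{row}(A^\star)>r$, as your closing remark says. Second, the $W_0$ issue you raise is real. The paper sidesteps it by treating the whole oracle weight as the rank-$r$ approximation target, which is the residual view you propose.
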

\vspace{-1.5em}
\begin{proof}
See Appendix \ref{sec:proof_of_prop1}.
\end{proof}
\vspace{-1.0em}
% \noindent\textbf{Method.} Guided by Proposition xxx and the empirical crossover pattern, we propose Inverse Asymmetric Tuning (IAT), which discards uniform splitting in favor of a module-aware protocol. Specifically, let $\mathcal{E}$ and $\mathcal{D}$ denote the sets of LoRA-adapted layers in the encoder and decoder, respectively. For encoder layers ($l \in \mathcal{E}$), dominated by acquisition shift, we adopt a "Local-$A$ / Shared-$B$" strategy. Clients optimize the input projector $A_k$ locally, while the output projector $B$ is aggregated. The server update rule is defined as:
% \begin{equation}
% B_{agg}^{(t+1, l)} \leftarrow \sum_{k=1}^K p_k B_k^{(t+1, l)}, \quad \forall l \in \mathcal{E}, \quad \text{with } A_k^{(t+1,l)} \text{ retained locally.}
% \label{eq:iat_encoder}
% \end{equation}
% Conversely, for decoder layers ($l \in \mathcal{D}$), dominated by supervision shift, we invert this strategy to "Shared-$A$ / Local-$B$". Here, the input projector $A$ is aggregated to maintain a consistent shared feature subspace, while the output projector $B_k$ remains local. The corresponding protocol follows:
% \begin{equation}
% A_{agg}^{(t+1, l)} \leftarrow \sum_{k=1}^K p_k A_k^{(t+1, l)}, \quad \forall l \in \mathcal{D}, \quad \text{with } B_k^{(t+1,l)} \text{ retained locally.}
% \label{eq:iat_decoder}
% \end{equation}
% This structural inversion explicitly aligns parameter roles with the source of heterogeneity, effectively absorbing acquisition-driven appearance variations in the encoder while accommodating supervision-induced prediction variations in the decoder.
\noindent\textbf{Method.} Guided by Proposition 1 and the empirical ``crossover'' pattern, we propose \textbf{Inverse Asymmetric Tuning (IAT)}, which discards uniform splitting in favor of a module-aware protocol. Specifically, let $\mathcal{E}$ and $\mathcal{D}$ denote the sets of LoRA-adapted layers in the encoder and decoder, respectively.

For encoder layers ($l \in \mathcal{E}$), dominated by acquisition shift, we adopt a Local-$A$ / Shared-$B$ strategy. Clients optimize the input projector $A_k$ locally to filter site-specific imaging artifacts, while the output projector $B$ is aggregated. The server update rule is:
\begin{equation}
\small
\setlength\abovedisplayskip{4pt}
\setlength\belowdisplayskip{4pt}
B_{agg}^{(t+1, l)} \leftarrow \sum_{k=1}^K p_k B_k^{(t+1, l)}, \quad \forall l \in \mathcal{E},
\label{eq:iat_encoder}
\end{equation}
while the input projector $A_k^{(t+1,l)}$ is retained locally.

Conversely, for decoder layers ($l \in \mathcal{D}$), dominated by supervision shift, we invert this strategy to Shared-$A$ / Local-$B$. Here, the input projector $A$ is aggregated to maintain a consistent shared feature subspace:
\begin{equation}
\small
\setlength\abovedisplayskip{4pt}
\setlength\belowdisplayskip{4pt}
A_{agg}^{(t+1, l)} \leftarrow \sum_{k=1}^K p_k A_k^{(t+1, l)}, \quad \forall l \in \mathcal{D},
\label{eq:iat_decoder}
\end{equation}
while the output projector $B_k^{(t+1,l)}$ is retained locally to adapt to divergent annotation standards. This structural inversion explicitly aligns parameter roles with the source of heterogeneity.

\subsubsection{Subspace Orthogonality Regularizer}
\noindent\textbf{Motivation.}
\textit{Structural decoupling alone is insufficient; the bilinear parameterization induces implicit gradient coupling, necessitating functional constraints to prevent leakage.} 
Although IAT structurally separates parameters, the optimization dynamics remain entangled due to the multiplicative nature of LoRA ($\Delta W = B A$). The gradient update for the shared factor is functionally dependent on the local factor ($G_k A_k^\top$), creating a channel for heterogeneity leakage. We formalize this interaction through standard SGD decomposition:
\begin{proposition}[Bilinear Leakage in Federated Updates]
Let $\Delta W_k = B A_k$ where $B$ is shared and $A_k$ is local. The aggregated global update for $B$ with learning rate $\eta$ can be decomposed as:
\begin{equation}
\small
\setlength\abovedisplayskip{4pt}
\setlength\belowdisplayskip{4pt}
B^{(t+1)} = B^{(t)} - \eta \underbrace{\sum_{k=1}^K p_k G_k \bar{A}^\top}_{\text{Common Drift}} - \eta \underbrace{\sum_{k=1}^K p_k G_k (A_k - \bar{A})^\top}_{\text{Heterogeneity Leakage}},
\label{eq:bilinear_leakage}
\end{equation}
where $\bar{A} = \sum p_k A_k$, and $G_k := \nabla_{\Delta W}\mathcal{L}_k$ denotes the gradient w.r.t. the adaptation matrix $\Delta W$. The last term represents the leakage where local deviations $(A_k - \bar{A})$ contaminate the shared update. An analogous decomposition holds symmetrically for the decoder (shared $A$, local $B$) by swapping the roles of $(A, B)$.
\end{proposition}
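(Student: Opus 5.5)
The argument reduces to the chain rule followed by an add-and-subtract of $\bar A$; nothing beyond linearity is needed.

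\emph{Local gradient.} Fix client $k$ and let $\Delta W_k = B A_k$, with loss $\mathcal{L}_k$ viewed as a function of $\Delta W$. By the chain rule for a bilinear map, the gradient with respect to the shared factor is
\[
\nabla_B \mathcal{L}_k = G_k A_k^\top, \qquad G_k := \nabla_{\Delta W}\mathcal{L}_k \big|_{\Delta W = B^{(t)} A_k}.
\]
This follows from the first-order expansion $\mathcal{L}_k(B+\delta B) = \mathcal{L}_k(B) + \langle G_k, \delta B\, A_k\rangle_F + o(\|\delta B\|)$ and the identity $\langle G_k, \delta B\, A_k\rangle_F = \langle G_k A_k^\top, \delta B\rangle_F$.

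\emph{One-step local update and aggregation.} I will assume the standard one-local-step SGD/FedSGD setting, or equivalently read $G_k$ as the accumulated gradient over the round with $A_k$ held at its round value; I will state this assumption explicitly. Each client then produces
\[
B_k^{(t+1)} = B^{(t)} - \eta\, G_k A_k^\top.
\]
The server applies the aggregation rule of \cref{eq:iat_encoder} with weights satisfying $\sum_k p_k = 1$, which gives
\[
B^{(t+1)} = \sum_k p_k B_k^{(t+1)} = B^{(t)} - \eta \sum_k p_k G_k A_k^\top.
\]

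\emph{Decomposition.} Write $A_k = \bar A + (A_k - \bar A)$ with $\bar A = \sum_k p_k A_k$. Transposing and distributing $G_k$ splits the sum into the Common Drift term $\sum_k p_k G_k \bar A^\top$ and the Heterogeneity Leakage term $\sum_k p_k G_k (A_k-\bar A)^\top$, which is exactly \cref{eq:bilinear_leakage}. The leakage term vanishes when all $A_k$ coincide, and, when every client sees the same gradient, also whenever $\sum_k p_k (A_k-\bar A)=0$. In general it is nonzero precisely when $G_k$ correlates with the local deviations $A_k-\bar A$. This motivates reading it as the "leakage" of heterogeneity into the shared factor.

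\emph{Decoder case.} For the decoder, where $A$ is shared and $B_k$ is local, the chain rule gives $\nabla_A \mathcal{L}_k = B_k^\top G_k$. The same aggregation and the split $B_k = \bar B + (B_k - \bar B)$ then yield the analogous decomposition $\sum_k p_k \bar B^\top G_k + \sum_k p_k (B_k-\bar B)^\top G_k$.

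\emph{Main obstacle.} The only subtle point is multi-step local training. There $G_k$ and $A_k$ change along the local trajectory, so the identity holds only if $G_k$ is redefined as the time-summed gradient and $A_k$ is frozen during the round, or only up to $O(\eta^2)$ terms otherwise. I would present the single-step identity as exact and remark that the multi-step case holds to first order in $\eta$.
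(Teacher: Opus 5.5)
Your proposal is correct and follows the paper's proof essentially step for step: the chain rule $\nabla_B\mathcal{L}_k = G_k A_k^\top$, FedAvg aggregation of the one-step local updates, the split $A_k^\top = \bar A^\top + (A_k-\bar A)^\top$, the symmetric decoder case via $\nabla_A\mathcal{L}_k = B_k^\top G_k$, and a closing remark on multi-step training, where your $O(\eta^2)$ caveat is more precise than the paper's ``accumulated pseudo-gradient'' reading. One small point: $\sum_k p_k (A_k-\bar A)=0$ holds identically, so the leakage term vanishes automatically whenever all clients have the same gradient, and your phrasing ``whenever $\sum_k p_k(A_k-\bar A)=0$'' should not be read as an extra condition.
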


\vspace{-1.5em}
\begin{proof}
See Appendix \ref{sec:proof_of_prop2}.
\end{proof}
\vspace{-1.0em}
\noindent\textbf{Method.} To suppress this leakage, we introduce SOR, which discourages alignment between the shared update direction and the local drift direction in a compact $r \times r$ proxy space. Here, let $A_{0,k}^{(t,l)}, B_{0,k}^{(t,l)}$ denote the detached parameter anchors at the start of round $t$, and let $\delta A_{k}^{(t,l)}, \delta B_{k}^{(t,l)}$ be the exponential moving averages (EMA) of the private-factor drifts within the round. We define the rank-efficient proxies with explicit stop-gradient (sg) operations as:
\begin{equation}
\small
\setlength\abovedisplayskip{4pt}
\setlength\belowdisplayskip{4pt}
\begin{aligned}
P_{sh,k}^{(t,l)} &= \big(B_{k}^{(t,l)} - B_{0,k}^{(t,l)}\big)^{\top} B_{0,k}^{(t,l)}, \\
P_{lo,k}^{(t,l)} &= \operatorname{sg}\!\left[ A_{0,k}^{(t,l)} \big(\delta A_{k}^{(t,l)}\big)^{\top} \right], \quad \forall l\in\mathcal{E}; \\
Q_{sh,k}^{(t,l)} &= \big(A_{k}^{(t,l)} - A_{0,k}^{(t,l)}\big) A_{0,k}^{(t,l)\top}, \\
Q_{lo,k}^{(t,l)} &= \operatorname{sg}\!\left[ B_{0,k}^{(t,l)\top} \delta B_{k}^{(t,l)} \right], \quad \forall l\in\mathcal{D}.
\end{aligned}
\label{eq:sor_proxies}
\end{equation}
We then minimize the squared normalized Frobenius inner product between these proxies. By minimizing Eq. (\ref{eq:sor_loss}), SOR produces gradients primarily for the shared factors, ensuring they evolve orthogonally to the local drifts while leaving personalization unconstrained:
\begin{equation}
\small
\setlength\abovedisplayskip{4pt}
\setlength\belowdisplayskip{4pt}
\begin{split}
\mathcal{L}_{\mathrm{SOR}}^{(k)} &= \sum_{l \in \mathcal{E}} \left( \frac{\langle P_{sh,k}^{(t,l)}, P_{lo,k}^{(t,l)} \rangle_F}{\|P_{sh,k}^{(t,l)}\|_F \, \|P_{lo,k}^{(t,l)}\|_F + \epsilon} \right)^2 \\
&\quad + \sum_{l \in \mathcal{D}} \left( \frac{\langle Q_{sh,k}^{(t,l)}, Q_{lo,k}^{(t,l)} \rangle_F}{\|Q_{sh,k}^{(t,l)}\|_F \, \|Q_{lo,k}^{(t,l)}\|_F + \epsilon} \right)^2.
\end{split}
\label{eq:sor_loss}
\end{equation}
Each client $k$ optimizes the total objective $\mathcal{L}_{\mathrm{total}}^{(k)} = \mathcal{L}_{\mathrm{seg}} + \lambda \mathcal{L}_{\mathrm{SOR}}^{(k)}$, and the server aggregates factors according to the IAT protocol.

\subsection{Convergence Analysis}

We establish the convergence guarantees of the proposed framework within the standard non-convex federated optimization analysis. To explicitly model the partial sharing mechanism, we parameterize the full optimization space as $\Theta := (\Theta^{\mathrm{sh}}, \{\Theta_k^{\mathrm{lo}}\}_{k=1}^K)$, where $\Theta^{\mathrm{sh}}$ represents the aggregated LoRA factors (shared across clients) and $\Theta_k^{\mathrm{lo}}$ denotes the client-specific local factors. The global objective is formulated as:
\begin{equation}\small
\setlength\abovedisplayskip{0pt}\setlength\belowdisplayskip{0pt}
	\min_{\Theta} F(\Theta) := \sum_{k=1}^K p_k \mathcal{L}_k(\Theta_k)
\end{equation}
here the local objective is defined as $\mathcal{L}_k(\Theta_k) := \mathcal{L}^{\text{seg}}_k(\Theta^{\mathrm{sh}}, \Theta_k^{\mathrm{lo}}) + \lambda \mathcal{L}^{\text{SOR}}_k(\Theta_k)$. Here, $p_k$ is the relative weight of client $k$. The training process spans $R$ communication rounds. In each round, participating clients perform $E$ steps of local SGD with stepsize $\eta$ to update both shared and local components, after which the server aggregates only $\Theta^{\mathrm{sh}}$.

We adopt the following standard assumptions to characterize the optimization landscape.

\begin{assumption}[$L$-Smoothness]
\label{assumption1}
For each client $k$, the local objective $\mathcal{L}_k$ is differentiable and $L$-smooth with respect to $\Theta_k$. That is, for any parameters $\Theta_1, \Theta_2$:
\begin{equation}\small
\setlength\abovedisplayskip{0pt}\setlength\belowdisplayskip{0pt}
    \mathcal{L}_k(\Theta_1) \le \mathcal{L}_k(\Theta_2) + \langle \nabla \mathcal{L}_k(\Theta_2), \Theta_1 - \Theta_2 \rangle + \frac{L}{2}\|\Theta_1 - \Theta_2\|_F^2.
\end{equation}
\end{assumption}

\begin{assumption}[Bounded Gradients]
\label{assumption2}
Let $g_{k,t} = \nabla \mathcal{L}_k(\Theta_{k,t}; \xi_{k,t})$ be the unbiased stochastic gradient sampled from client $k$ at step $t$. The expected squared norm is uniformly bounded, i.e., $\mathbb{E}\|g_{k,t}\|_F^2 \le G^2$ for all $k, t$.
\end{assumption}

\begin{assumption}[Non-Degenerate LoRA Factors]
\label{assumption3}
To ensure sufficient gradient flow through the low-rank bottleneck, we assume the LoRA factors remain non-degenerate (i.e., $\sigma_{\min} \ge \delta > 0$). Consequently, there exist constants $c_A, c_B > 0$ such that the projection of the true gradient onto the local update direction $U_{k,t}$ satisfies:
\begin{equation}\small
\setlength\abovedisplayskip{0pt}\setlength\belowdisplayskip{0pt}
    \langle \nabla \mathcal{L}_k(\Theta_{k,t}), U_{k,t} \rangle \ge (c_A + c_B) \|\nabla \mathcal{L}_k(\Theta_{k,t})\|_F^2,
\end{equation}
where the inner product is defined over the corresponding LoRA-parameter subspace.
\end{assumption}

Assumptions \ref{assumption1} and \ref{assumption2} are widely adopted in federated optimization \citep{li2019convergence}. Assumption \ref{assumption3} is specific to low-rank parameterization, guaranteeing that the update direction remains a valid descent direction by preventing the collapse of the optimization landscape.

Based on these assumptions, we derive the following convergence rate (detailed proof in Appendix \ref{proof_of_theo}).

\begin{theorem}
\label{theorem1}
Let Assumptions \ref{assumption1}--\ref{assumption3} hold. Let $T$ be the total number of local iterations. By setting the learning rate $\eta = \min\{\bar{\eta}, \sqrt{\frac{4D}{MT}}\}$, where $D$ bounds the initial suboptimality and $M = 2LC_2G^2$ is a problem-dependent constant (defined in Appendix), we have:
\begin{equation}\small
\setlength\abovedisplayskip{0pt}\setlength\belowdisplayskip{0pt}
    \frac{1}{K T} \sum_{k=1}^K \sum_{t=0}^{T-1} \mathbb{E}\Bigl[\|\nabla \mathcal{L}_k(\Theta_{k,t})\|_F^2\Bigr]
    \;\le\;
    \frac{2}{c_A + c_B} \sqrt{\frac{D M}{T}}.
\end{equation}
\end{theorem}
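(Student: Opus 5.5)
The plan is a standard one-step descent argument from Assumption~\ref{assumption1}, summed over iterations and clients and then optimized over $\eta$. Fix client $k$ and local step $t$. Write the update as $\Theta_{k,t+1} = \Theta_{k,t} - \eta U_{k,t}$, where $U_{k,t}$ is the LoRA-parameter update direction. It is the stochastic gradient $g_{k,t}$ pushed through the bilinear parameterization; after aggregation it also includes the server's averaging of the shared block.

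By Assumption~\ref{assumption1},
\begin{equation*}
\mathcal{L}_k(\Theta_{k,t+1}) \le \mathcal{L}_k(\Theta_{k,t}) - \eta\langle \nabla\mathcal{L}_k(\Theta_{k,t}), U_{k,t}\rangle + \tfrac{L\eta^2}{2}\|U_{k,t}\|_F^2 .
\end{equation*}
Taking conditional expectation and using unbiasedness, Assumption~\ref{assumption3} lower-bounds the inner product by $(c_A+c_B)\|\nabla\mathcal{L}_k(\Theta_{k,t})\|_F^2$.

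For the second-order term I would bound $\mathbb{E}\|U_{k,t}\|_F^2 \le C_2 G^2$. The constant $C_2$ absorbs the operator norms of the LoRA factors, which Assumption~\ref{assumption3} together with boundedness of the iterates keeps controlled. It also absorbs the nonexpansiveness of the weighted averaging in \eqref{eq:iat_encoder}--\eqref{eq:iat_decoder}, since a convex combination does not increase Frobenius-norm bounds, by Jensen. The SOR term is a squared normalized cosine, which is bounded and smooth away from $\epsilon=0$, so it is subsumed in $G$ by Assumption~\ref{assumption2}. This gives the per-step inequality
\begin{equation*}
\eta(c_A+c_B)\,\mathbb{E}\|\nabla\mathcal{L}_k(\Theta_{k,t})\|_F^2 \le \mathbb{E}[\mathcal{L}_k(\Theta_{k,t})-\mathcal{L}_k(\Theta_{k,t+1})] + \tfrac{\eta^2}{4} M ,
\end{equation*}
with $M=2LC_2G^2$.

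Next I would telescope over $t=0,\dots,T-1$ and average over $k$. The aggregation step is where the main obstacle lies. At a communication round the shared block is replaced by the average, so $\mathcal{L}_k$ evaluated after aggregation does not equal $\mathcal{L}_k$ at the client's last local iterate, and the telescoping sum breaks.

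I would try to handle this by telescoping the weighted global objective $F$ rather than the individual $\mathcal{L}_k$, and by charging the averaging discrepancy to the same $\eta^2$-order term. By $L$-smoothness, the deviation of each client's shared block from the average after $E$ local steps is at most $\eta E G$ in norm. The resulting excess is $O(L\eta^2 E^2G^2)$, which I would fold into $C_2$; this is why $C_2$ is "defined in the Appendix". The telescoped sum is then bounded by the initial suboptimality, $\frac1K\sum_k(\mathcal{L}_k(\Theta_{k,0})-\inf\mathcal{L}_k)\le D$, which yields
\begin{equation*}
\frac{1}{KT}\sum_{k,t}\mathbb{E}\|\nabla\mathcal{L}_k\|_F^2 \le \frac{1}{c_A+c_B}\Bigl(\frac{D}{\eta T} + \frac{\eta M}{4}\Bigr).
\end{equation*}

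Finally, I would balance the two terms. With $\eta=\sqrt{4D/(MT)}$, each term equals $\tfrac12\sqrt{DM/T}\cdot 2=\sqrt{DM/T}$, so the bracket is $2\sqrt{DM/T}$, which gives the stated bound. The cap $\bar\eta$ is needed so that the smoothness and non-degeneracy conditions stay valid along the trajectory, for instance $\eta\le 1/L$ and the factors keeping $\sigma_{\min}\ge\delta$. When $\eta=\bar\eta$ is the active choice, $T$ is large enough that the $D/(\eta T)$ term is dominated, and the same bound holds up to the displayed constant.
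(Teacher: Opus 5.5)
Your skeleton (smoothness descent step, Assumption~\ref{assumption3} for the inner product, a second-moment bound, telescoping, balancing $\eta$) is the same as the paper's. Replacing the paper's bound $C_1\|\nabla\mathcal{L}_k\|_F^2+C_2G^2$ by $C_2G^2$ is legitimate, since Assumption~\ref{assumption2} already bounds the gradient with respect to the LoRA factors, SOR included. It gives you the full coefficient $c_A+c_B$.

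\textbf{The gap is the aggregation step}, which is where you go beyond the paper. You cannot put the averaging inside $U_{k,t}$, because Assumption~\ref{assumption3} only constrains local directions. Your estimate of the averaging cost also keeps only the quadratic part of the smoothness expansion. Let $g_k$ be client $k$'s shared-block gradient at the start of a round and $\bar g=\sum_jp_jg_j$. Then $\bar\Theta^{\mathrm{sh}}-\Theta_k^{\mathrm{sh}}\approx\eta E(g_k-\bar g)$, and the jump changes $\sum_kp_k\mathcal{L}_k(\Theta_k)$ by
\begin{align*}
&\sum_k p_k\bigl\langle\nabla_{\mathrm{sh}}\mathcal{L}_k(\Theta_k),\,\bar\Theta^{\mathrm{sh}}-\Theta_k^{\mathrm{sh}}\bigr\rangle+\tfrac{L}{2}\sum_kp_k\|\bar\Theta^{\mathrm{sh}}-\Theta_k^{\mathrm{sh}}\|_F^2\\
&\qquad\approx \eta E\sum_kp_k\|g_k-\bar g\|_F^2+O(L\eta^2E^2G^2).
\end{align*}
The leading term is first order in $\eta$ and nonnegative. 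Averaging gives back exactly the heterogeneous part of the descent $\eta E\sum_kp_k\|g_k\|_F^2$ earned during the round, leaving net progress $\eta E\|\bar g\|_F^2$. So telescoping $F$ (or each $\mathcal{L}_k$) can only certify that $\|\sum_kp_k\nabla_{\mathrm{sh}}\mathcal{L}_k\|_F^2$ is small. It cannot control the client-wise average $\frac1K\sum_k\|\nabla\mathcal{L}_k(\Theta_{k,t})\|_F^2$ on the left-hand side, and this defect cannot be folded into $C_2$.

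In fact the inequality does not follow from Assumptions~\ref{assumption1}--\ref{assumption3} once aggregation is included. Take $K=2$, $p_k=\tfrac12$, $E=1$, one shared scalar, $\mathcal{L}_k(\theta)=\tfrac12(\theta-\mu_k)^2$ with $\mu_1=0$, $\mu_2=2$, exact gradients, and $\theta_0=1$. All three assumptions hold as stated: $L=1$, and $c_A+c_B=1$ with $U_{k,t}=\nabla\mathcal{L}_k$. Every local step starts both clients at $\theta=1$, and averaging returns them there. So the left-hand side equals $1$ for every $T$, while the right-hand side tends to $0$.

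The paper's own proof has the same hole: it explicitly drops the aggregation drift as a lower-order $O(\eta^2E^2G^2)$ term. Following the paper therefore does not rescue the step. A correct result needs either a different left-hand side (a stationarity measure of $F$ at the virtually averaged shared block, with the usual $O(L^2\eta^2E^2G^2)$ consensus term) or an extra non-vanishing gradient-dissimilarity term.

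Two smaller points:
\begin{itemize}
\item Your treatment of the branch $\eta=\bar\eta$ is reversed. That branch is active only when $T\le 4D/(M\bar\eta^2)$, i.e.\ for small $T$. There $D/(\bar\eta T)\ge\tfrac12\sqrt{DM/T}$ and can exceed the target by an arbitrary factor. In your version there is no $C_1$ term, so no cap is needed at all: $\bar\eta=(c_A+c_B)/(LC_1)=+\infty$.
\item At $\eta=\sqrt{4D/(MT)}$, each of $D/(\eta T)$ and $\eta M/4$ equals $\tfrac12\sqrt{DM/T}$. Your bracket is therefore $\sqrt{DM/T}$, and you get half the stated bound. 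The paper's factor $2$ comes from its halved coefficient $(c_A+c_B)/2$.
\end{itemize}
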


Theorem \ref{theorem1} indicates that our method achieves an $\mathcal{O}(1/\sqrt{T})$ convergence rate to a stationary point. This matches the standard rate of FedAvg in non-convex settings up to lower-order aggregation drift terms (see Appendix), confirming that our asymmetric partial sharing strategy preserves theoretical convergence guarantees.

\begin{figure}[t]
	\centering
	\includegraphics[width=\linewidth]{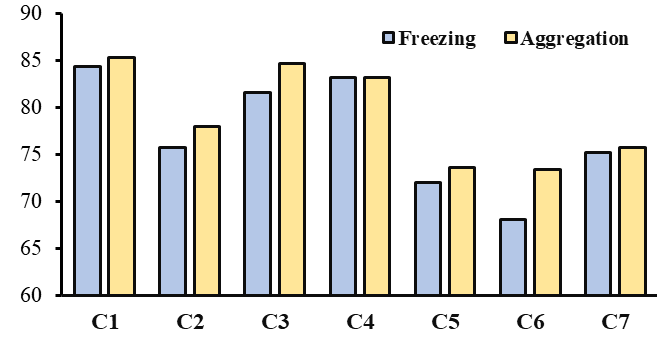} 
\caption{\small \textbf{Preliminary ablation on LoRA configuration.} We compare the performance of applying LoRA to the \textit{Encoder-only} versus the \textit{Encoder+Decoder}. The results demonstrate that, unlike classification tasks, medical segmentation requires adapting the decoder to reconstruct precise pixel-level details, validating our choice to inject LoRA into both modules.}
	\label{lora_val}
    \vspace{-1.5em}
\end{figure}

\begin{table*}[t]\small
\centering
\captionsetup{font=small}
\caption{{
\textbf{Comparison with State-of-the-Art Fine-Tuning Solutions} on Histology nuclei and Fundus photography images.
The optimal and sub-optimal results are denoted by boldface and underlining. {\color{red}$\uparrow$} means improved accuracy compared with the sub-optimal results. 
}}
\scriptsize{
\resizebox{\linewidth}{!}{
\setlength\tabcolsep{3.pt}
\renewcommand\arraystretch{1.1}
\begin{tabular}{r||ccccccc|c||cccc|c}
\hline\hline
\rowcolor{gray!20}
 &
\multicolumn{8}{c|}{\textbf{Histology nuclei}} &
\multicolumn{5}{c}{\textbf{Fundus photography images}}
\\
\cline{2-14}
\rowcolor{gray!20}
\multirow{-2}{*}{Methods\quad}
& \textsc{Adrena} & \textsc{Esophagus} & \textsc{Bile-duct} & \textsc{Uterus} & \textsc{MoNuSAC} & \textsc{TNBC} & \textsc{MoNuSeg} & \textit{Avg}
& \textsc{REFUGE} & \textsc{ORIGA} & \textsc{G1020} & \textsc{Drishti-GS1} & \textit{Avg}
\\
\hline\hline

\multicolumn{14}{l}
{\textcolor{gray!80}{\textit{LoRA Rank=8}}}
\\
\rowcolor{gray!10} FedIT
& 85.25 & \textbf{78.98} & \underline{84.69} & \underline{83.28} & 72.97 & 72.16 & 76.42 & 79.11
& 88.85 & 84.60 & \underline{78.15} & 57.74 & 77.33
\\

FLoRA
& 79.45 & 71.67 & 77.09 & 79.77 & 60.49 & 63.84 & 65.24 & 71.08
& 79.89 & 80.98 & 69.50 & 71.71 & 75.52
\\

\rowcolor{gray!10} FedSA
& 84.89 & 77.07 & 83.97 & 82.29 & \underline{75.68} & 75.65 & \underline{81.10} & \underline{80.09}
& 88.69 & 83.67 & \textbf{79.15} & \underline{80.64} & \underline{83.04}
\\

FFA-LoRA
& 82.21 & 69.99 & 77.76 & 80.30 & 65.40 & 3.59 & 0.74 & 54.29
& \underline{89.27} & 81.12 & 72.60 & 29.07 & 68.02
\\

\rowcolor{gray!10} FedDPA
& 83.55 & 76.10 & 81.65 & 81.35 & 74.36 & \underline{76.93} & \textbf{81.51} & 79.35
& 88.08 & 84.70 & 72.75 & 78.28 & 80.95
\\

LoRA-FAIR
& 84.13 & 75.64 & 82.73 & 83.10 & 72.50 & 65.70 & 75.00 & 76.97
& 89.24 & 84.78 & 77.86 & 70.78 & 80.67
\\

\rowcolor{gray!10} FlexLoRA
& \underline{85.71} & \underline{77.30} & 82.88 & \textbf{84.13} & 74.35 & 70.73 & 72.36 & 78.21
& \textbf{89.38} & \underline{85.40} & 77.53 & 65.72 & 79.51
\\

FRLoRA
& 83.87 & 76.74 & 81.39 & 83.16 & 72.74 & 70.48 & 71.35 & 77.11
& 88.90 & 84.69 & 77.21 & 60.55 & 77.83
\\
\hline
\rowcolor[HTML]{D7F6FF}
\textbf{Ours}
& \textbf{85.80} & 77.10 & \textbf{85.55} & 82.79 & \textbf{78.01} & \textbf{79.49} & 81.06 & \textbf{81.40}
& 89.00 & \textbf{85.47} & \underline{78.15} & \textbf{85.43} & \textbf{84.52}
\\
\hline
\hline
\multicolumn{14}{l}{\textcolor{gray!80}{\textit{LoRA Rank=16}}}
\\
\rowcolor{gray!10} FedIT
& 84.48 & 76.81 & \underline{85.03} & 83.25 & 74.93 & 77.79 & 77.49 & 79.97
& \textbf{89.67} & 85.80 & \underline{80.00} & 41.52 & 74.25
\\

FLoRA
& 83.06 & 72.76 & 79.03 & 81.30 & 63.68 & 59.62 & 68.29 & 72.53
& 83.44 & 83.37 & 71.99 & 74.84 & 78.41
\\

\rowcolor{gray!10} FedSA
& 84.13 & \underline{78.20} & 83.39 & 81.66 & 75.62 & \underline{78.42} & \textbf{82.86} & \underline{80.61}
& 89.39 & 86.04 & 79.36 & \underline{83.91} & \underline{84.68}
\\

FFA-LoRA
& 81.96 & 69.41 & 77.23 & 80.00 & 63.80 & 3.03 & 0.87 & 53.76
& 89.34 & 80.95 & 72.46 & 26.82 & 67.39
\\

\rowcolor{gray!10} FedDPA
& 84.17 & 77.64 & 83.53 & 82.47 & \underline{75.63} & 76.95 & \underline{81.75} & 80.31
& 88.76 & 84.61 & 73.65 & 77.42 & 81.11
\\

LoRA-FAIR
& 84.23 & 78.13 & 84.32 & 83.41 & 75.38 & 74.25 & 77.70 & 79.63
& 88.73 & \textbf{86.51} & 75.46 & 64.78 & 78.87
\\

\rowcolor{gray!10} FlexLoRA
& \textbf{85.77} & \textbf{79.52} & 81.90 & 83.32 & 74.83 & 71.11 & 74.01 & 78.64
& 89.48 & 86.19 & 78.35 & 71.87 & 81.47
\\

FRLoRA
& 84.43 & 78.01 & 80.94 & \textbf{83.70} & 74.90 & 71.11 & 74.20 & 78.18
& 89.42 & 84.75 & 78.78 & 59.77 & 78.18
\\

\rowcolor[HTML]{D7F6FF}
\textbf{Ours}
& \underline{85.69} & 78.19 & \textbf{85.52} & \underline{83.51} & \textbf{75.96} & \textbf{80.49} & 81.12 & \textbf{81.49}
& \underline{89.61} & \underline{86.48} & \textbf{80.30} & \textbf{86.43} & \textbf{85.70}
\\

\hline
\end{tabular}}}

\label{tab:compare_sota_image_capation}
\end{table*}

\begin{table}[t]\small
\captionsetup{font=small}
\caption{
\textbf{Ablation study of key components} on the retinal fundus segmentation task. 
}
\centering
{
\resizebox{\columnwidth}{!}{
\setlength\tabcolsep{5pt}
\renewcommand\arraystretch{1.1}
\begin{tabular}{cc||ccccIc}
\hline \thickhline
\rowcolor{gray!20} &
& \multicolumn{4}{cI}{\textbf{Fundus photography images}}
&  \\
\cline{3-7} 
\rowcolor{gray!20}
\multirow{-2}{*}{IAT} & \multirow{-2}{*}{SOR}  
& \textsc{REFUGE} & \textsc{ORIGA} & \textsc{G1020} & \textsc{Drishti-GS1} & \textit{Avg}
\\
\hline\hline

\multicolumn{2}{c||}{\lora{}} 
& 88.85 & 84.60 & 78.15 & 57.74 & 77.33 
\\ 
\hdashline

\ding{51} & 
& \textbf{89.13} & 85.39 & 77.71 & 81.64 & 83.47
\\

\rowcolor[HTML]{D7F6FF}
\ding{51} & \ding{51}
& 89.00 & \textbf{85.47} & \textbf{78.15} & \textbf{85.43} & \textbf{84.52}
\\
\hline
\end{tabular}}}

\label{tab:ablation_module}
\end{table}

% \begin{table}[t]\small
% \centering
% {
% \resizebox{\columnwidth}{!}{
% \setlength\tabcolsep{6pt}
% \renewcommand\arraystretch{1.1}
% \begin{tabular}{l||cIc}
% \hline \thickhline
% \rowcolor{gray!20}
% \textbf{Method}
% & \textbf{Trainable Params.}
% & \textbf{Per-round Communicated Params.}
% \\
% \hline\hline

% FedIT      & -- & -- \\
% FLoRA      & -- & -- \\
% FedSA      & -- & -- \\
% FFA-LoRA   & -- & -- \\
% FedDPA     & -- & -- \\
% LoRA-FAIR  & -- & -- \\
% FlexLoRA   & -- & -- \\
% FRLoRA     & -- & -- \\
% \hline
% \rowcolor[HTML]{D7F6FF}
% \textbf{Ours} & \textbf{--} & \textbf{--} \\
% \hline
% \end{tabular}}}
% \captionsetup{font=small}
% \caption{
% \textbf{System Efficiency Comparison} of fine-tuning methods, reporting trainable parameters and per-round communicated parameters.
% }
% \label{tab:system_efficiency}
% \end{table}
\begin{table}[t]\small
\captionsetup{font=small}
\caption{
\textbf{System Efficiency Comparison} reporting Trainable Parameters and Per-round Communicated Parameters.
}
\centering
{
\resizebox{\columnwidth}{!}{
\setlength\tabcolsep{6pt}
\renewcommand\arraystretch{1.1}
\begin{tabular}{l||cIc}
\hline \thickhline
\rowcolor{gray!20}
\textbf{Method} & \textbf{Trainable Param.} & \textbf{Per-round Communicated Param.}
\\
\hline\hline
\rowcolor{gray!10} FedIT     & 0.39M & 0.78M \\
FLoRA     & 4.35M & 29.94M \\
\rowcolor{gray!10} FedSA     & 0.39M & 0.25M \\
FFA-LoRA  & 0.26M & 0.53M \\
\rowcolor{gray!10} FedDPA    & 4.65M & 8.71M \\
LoRA-FAIR & 0.39M & 0.78M \\
\rowcolor{gray!10} FlexLoRA  & 0.39M & 0.78M \\
FRLoRA    & 0.29M & 21.82M \\
\hline
\rowcolor[HTML]{D7F6FF}
\textbf{Ours} & 0.39M & 0.55M \\
\hline
\end{tabular}}}

\label{tab:system_efficiency}
\end{table}

\section{Experiments} 
\subsection{Experiment Settings}
\noindent \textbf{Datasets}. To comprehensively evaluate the effectiveness of our proposed framework, we conduct extensive experiments on challenging medical image segmentation benchmarks under federated learning settings. We utilize two widely used and publicly available tasks, as detailed below.
\begin{itemize}[leftmargin=*]
        \setlength{\itemsep}{0pt}
	\setlength{\parsep}{-2pt}
	\setlength{\parskip}{-0pt}
	\setlength{\leftmargin}{-10pt}
	\vspace{-8pt}
% \item \textbf{Histology nuclei segmentation}: We assembled four public datasets (TCIA~\cite{hou2020dataset}, CRC~\cite{awan2017glandular}, KIRC~\cite{irshad2014crowdsourcing}, TNBC~\cite{naylor2018segmentation}) for binary segmentation of cell nuclei in histology images.
\item \textbf{Histology nuclei segmentation:} We evaluate binary nuclei segmentation in a federated, cross-domain setting comprising seven clients. The setup includes four clients derived from specific tissue types within the PanNuke dataset (Adrenal gland, Esophagus, Bile duct, and Uterus)~\cite{gamper2019pannuke,gamper2020pannuke}. The remaining three clients correspond to the independent MoNuSeg~\cite{kumar2017dataset}, MoNuSAC2020~\cite{verma2021monusac2020}, and TNBC~\cite{naylor2018segmentation} datasets. This benchmark is characterized by significant heterogeneity across clients in terms of tissue origin, staining procedures, and scanner variations between different centers. To ensure a unified binary task, all instance masks were converted to foreground/background masks.
\item \textbf{Fundus photography images segmentation}: This task involves the joint segmentation of the optic disc (OD) and optic cup (OC) across four decentralized clients representing different public fundus datasets: REFUGE~\cite{orlando2020refuge}, ORIGA-light~\cite{zhang2010origa}, Drishti-GS1~\cite{sivaswamy2015comprehensive}, and G1020~\cite{bajwa2020g1020}. Client heterogeneity primarily arises from diverse image acquisition conditions across different clinical sites, variations in patient cohorts, and distinct imaging protocols (e.g., the specific 30$^\circ$ field-of-view in Drishti-GS1 versus standard 45$^\circ$ protocols).
\vspace{-8pt}
\end{itemize}
We conduct experiments in a federated setting by treating each dataset as an individual client, mimicking real-world scenarios where each medical institution acts as a distinct participant. While these clients share a unified label space, their image appearances vary significantly due to diverse image acquisition protocols and equipment specifications. Following \cite{liu2024fedfms,wang2025pfedsam}, we adopt the same preprocessing pipelines and data splitting protocols. 

\noindent \textbf{Implementation Details.} We present our implementation details from three aspects:
\begin{itemize}[leftmargin=*]
        \setlength{\itemsep}{0pt}
	\setlength{\parsep}{-2pt}
	\setlength{\parskip}{-0pt}
	\setlength{\leftmargin}{-10pt}
	\vspace{-8pt}
\item \textbf{Model:} We adopt the Segment Anything Model (SAM) \cite{kirillov2023segment} as our foundation. Specifically, we utilize the \textbf{SAM ViT-B} variant initialized with weights pre-trained on the SA-1B dataset. The input images are resized to $1024 \times 1024$ and normalized following standard SAM protocols. Our framework is implemented in PyTorch and executed on NVIDIA A100 GPUs. For the federated setting, the training procedure spans 200 global communication rounds. In each round, the server employs a \textbf{weighted aggregation strategy} (based on client sample sizes) to update the global model. Locally, each client performs training for 1 epoch using the Adam optimizer with a batch size of 4. The optimizer's momentum parameters are set to $\beta_1=0.9$ and $\beta_2=0.999$. All clients share identical hyperparameter configurations to ensure fair comparisons.

\item \textbf{LoRA Configuration:} We inject LoRA rank decomposition matrices into the query and value projection layers of the Transformer blocks. Crucially, we apply LoRA to both the image encoder and the mask decoder. Unlike other tasks where the encoder dominates, medical image segmentation is a dense prediction task that relies heavily on the decoder to reconstruct pixel-level details from high-level features. To validate this, we conducted a preliminary ablation (see Fig.~\ref{lora_val}), comparing freezing the backbone while fine-tuning LoRA on the encoder-only versus the encoder-decoder. The results demonstrate that adapting the decoder is essential for capturing boundary details, yielding superior performance.

\item \textbf{Evaluation Metric:} Following prior works \cite{jiang2022harmofl}, we employ the \textbf{Dice Similarity Coefficient (DSC)} to quantitatively evaluate the segmentation accuracy. The DSC measures the overlap between the predicted segmentation mask and the ground truth.
\vspace{-8pt}
\end{itemize}

\noindent \textbf{Compared Baselines:} We compare our proposed framework with several state-of-the-art Federated Parameter-Efficient Fine-Tuning (FedPEFT) methods: (1) \textbf{FedIT}~\cite{zhang2024towards}. (2) \textbf{FLoRA}~\cite{wang2024flora}. (3) \textbf{FedSA}~\cite{guo2024selective}. (4) \textbf{FFA-LoRA}~\cite{sun2024improving}. (5) \textbf{FedDPA}~\cite{long2024dual}. (6) \textbf{LoRA-FAIR}~\cite{bian2025lora}. (7) \textbf{FlexLoRA}~\cite{bai2024federated}. (8) \textbf{FRLoRA}~\cite{yanfederated}. Note that since these methods were originally designed for Large Language Models (LLMs), we adapted their core mechanisms to our segmentation backbone to ensure a fair comparison.

\subsection{Comparison to State-of-the-Arts}
Table~\ref{tab:compare_sota_image_capation} presents the quantitative performance comparison on the Histology nuclei and Fundus photography datasets. Our results demonstrate that the proposed framework achieves superior performance across diverse heterogeneous clients. Existing FedPEFT methods often struggle to effectively align divergences caused by severe domain shifts, leading to performance instability. For instance, FFA-LoRA suffers from catastrophic degradation on specific clients (e.g., 3.59\% on TNBC), while FedIT fails to generalize to distinct domains like Drishti-GS1 (57.74\%). In contrast, our method successfully maintains robust generalization capabilities under these challenging conditions. By effectively decoupling covariate and concept shifts, our framework achieves the highest average DSC of \textbf{81.40\%} and \textbf{84.52\%} on the two tasks, surpassing the second-best methods by \textbf{1.31\%} and \textbf{1.48\%}, respectively. Specifically, on the most distinct domain (Drishti-GS1), our method outperforms the baseline FedIT by a remarkable margin of over 27\%, validating the effectiveness of our structure-aware tuning strategy in handling medical heterogeneity.

\subsection{Diagnostic Analysis}
\noindent \textbf{Hyper-parameter Study}. Fig.~\ref{hyper} illustrates the performance variations across both \textbf{Histology and Fundus datasets}. We observe a consistent trend on both modalities: introducing the regularization term ($\lambda > 0$) improves performance compared to negligible weighting ($10^{-6}$), confirming the necessity of the proposed SOR module. Specifically, the performance peaks at $\lambda = 10^{-4}$, achieving optimal DSC scores of 81.40\% (Histology) and 84.52\% (Fundus). While an overly large $\lambda$ (e.g., $10^{-3}$) leads to a slight decline due to over-regularization, the model maintains robust performance within the optimal range. Therefore, we universally set $\lambda = 10^{-4}$ for all tasks to ensure stable convergence.

\noindent \textbf{Ablation of Key Component}. We evaluate the contribution of each component using four diverse fundus datasets. Table~\ref{tab:ablation_module} presents the results, demonstrating that both components are essential.
The vanilla Federated LoRA struggles with severe domain shifts (e.g., 57.74\% on \textsc{Drishti-GS1}). Incorporating \textbf{IAT} significantly mitigates this, boosting the average DSC to 83.47\%. Furthermore, adding \textbf{SOR} promotes the orthogonality between shared and local features, yielding the optimal average performance of 84.52\%.
\begin{figure}[t]
	\centering
	\includegraphics[width=0.75\linewidth]{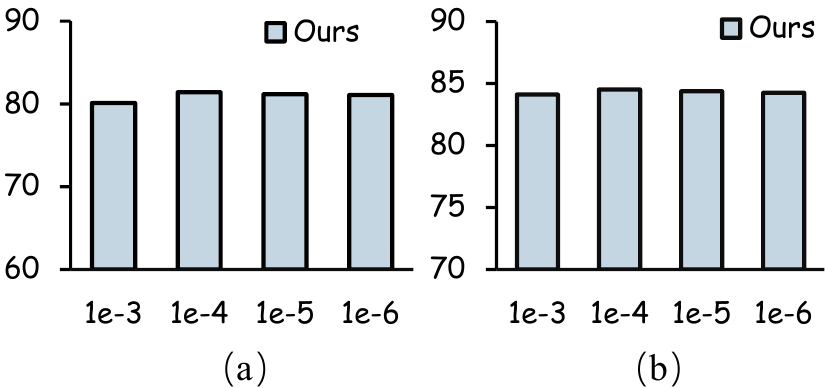} 
\caption{\small \textbf{Sensitivity analysis of $\lambda$.} Left: Histology nuclei. Right: Fundus photography.}
	\label{hyper}
    \vspace{-1.5em}
\end{figure}
% \begin{figure}[t]
% 	\centering
% 	\includegraphics[width=\linewidth]{m1-4.pdf} 
% \caption{\small \textbf{Sensitivity analysis of $\lambda$.} Left: Histology nuclei. Right: Fundus photography.}
% 	\label{m}
%     \vspace{-1.5em}
% \end{figure}

\noindent \textbf{Empirical Validation of Theoretical Analysis.} 
We conducted a pilot study on the Fundus dataset to validate the shift-dependent preference derived in Proposition ~\ref{proposition1}. We compared our proposed structure-aware strategy with standard \textit{Uniform Strategies} (e.g., FedSA) and other heuristic combinations. As shown in Figure~\ref{m}, our method consistently outperforms other configurations, achieving an optimal DSC of 83.47\%. 
Specifically, compared to the strong baseline FedSA (83.04\%), our approach yields a clear improvement by explicitly assigning local modules to absorb specific shifts. 
Moreover, deviating from this theoretical guidance—such as reversing the allocation rule—leads to suboptimal performance (82.54\% or lower). 
These empirical results strongly support our theorem: maximizing parameter efficiency requires inverting the tuning strategy between the encoder and decoder to align with their dominant heterogeneity types.

% \noindent \textbf{System Efficiency.} Table~\ref{tab:system_efficiency} details the efficiency metrics. \textit{(1) Parameter Efficiency:} With only 0.39M trainable parameters, our method remains as lightweight as FedIT and FedSA, while being significantly more storage-efficient than FLoRA (4.35M) and FedDPA (4.65M). \textit{(2) Communication Cost:} By transmitting only shared LoRA factors via IAT, we reduce per-round communication to 0.55M. This outperforms the standard FedIT (0.78M).  \textit{(3) Trade-off:} Although FedSA and FFA-LoRA achieve lower costs (0.25M/0.53M) through aggressive compression, they compromise segmentation accuracy. In contrast, our method achieves SOTA performance with only a marginal communication increase, demonstrating the best efficiency-performance trade-off.
\noindent \textbf{System Efficiency.} Table~\ref{tab:system_efficiency} details the comprehensive efficiency metrics. \textit{(1) Parameter Efficiency:} Our method demonstrates exceptional storage efficiency. With only \textbf{0.39M} trainable parameters, it maintains a lightweight footprint comparable to FedIT and FedSA. This is in sharp contrast to FLoRA (4.35M) and FedDPA (4.65M), which require updating significantly more parameters, imposing heavy storage burdens on local devices. \textit{(2) Communication Cost:} The proposed IAT protocol inherently optimizes bandwidth usage. By decoupling the model and transmitting only the shared LoRA factors (while keeping personalized factors local), we reduce the per-round communication cost to \textbf{0.55M}. This represents a substantial saving compared to the standard FedIT (0.78M), which aggregates all LoRA parameters. \textit{(3) Performance-Efficiency Trade-off:} Although FedSA and FFA-LoRA achieve lower communication costs (0.25M/0.53M) through aggressive compression strategies, they fail to maintain segmentation accuracy in complex medical scenarios. In contrast, our method achieves SOTA performance with only a marginal communication increase, demonstrating the best trade-off between communication overhead and model generalization.

\begin{figure}[t]
	\centering
	\includegraphics[width=0.45\linewidth]{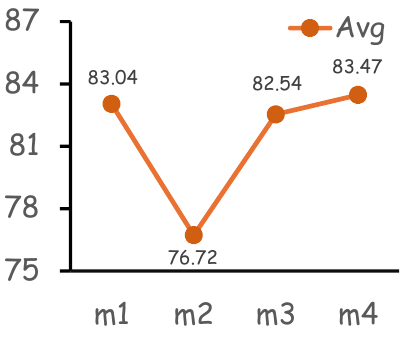}
	\caption{\small \textbf{Empirical validation of theoretical analysis.} 
	We compare different splitting configurations labeled as M1 to M4. 
	\textbf{M1}: Uniform Strategy (e.g., FedSA). 
	\textbf{M2}: Inappropriate Uniform Strategy. 
	\textbf{M3}: Reverse Hybrid Strategy. 
	\textbf{M4}: Proposed Inverse Asymmetric Tuning (IAT). 
	The results show that M4 consistently outperforms other heuristic combinations, validating the shift-dependent structural preference.}
	\label{m}
	\vspace{-1.5em}
\end{figure}

\section{Conclusion}
In this paper, we introduce a novel structure-aware federated tuning framework designed to adapt the Segment Anything Model (SAM) for heterogeneous medical image segmentation. 
Different from uniform splitting strategies, our method employs \textbf{Inverse Asymmetric Tuning} to decouple the optimization landscape: it selectively personalizes the encoder to absorb covariate (appearance) shifts while adapting the decoder to align with concept (supervision) shifts. 
To further mitigate the interference between global and local representations, we incorporate a \textbf{Subspace Orthogonality Regularizer}, which forces the shared and personalized subspaces to remain functionally orthogonal. Extensive experiments on multi-site histology nuclei segmentation and fundus photography images segmentation datasets demonstrate that our approach achieves state-of-the-art performance with superior parameter and communication efficiency.

% \clearpage
% \newpage
\section*{Acknowledgements}
This work is partially supported by the Science and Technology Innovation Program of Xiongan New Area (Grant No.2025XAGG0045), the National High Level Hospital Clinical Research Funding (2025-PUMCH-H-006), and the NTU AI-for-X Postdoctoral Fellowship.

\section*{Impact Statement}
This paper presents work whose goal is to advance the field of Machine
Learning. There are many potential societal consequences of our work, none
which we feel must be specifically highlighted here.

% In the unusual situation where you want a paper to appear in the
% references without citing it in the main text, use \nocite
% \nocite{langley00}

\bibliography{example_paper}

@inproceedings{FedAvg_AISTATS17,
    title={Communication-Efficient Learning of Deep Networks from Decentralized Data},
    author={McMahan, Brendan and Moore, Eider and Ramage, Daniel and Hampson, Seth and y Arcas, Blaise Aguera},
    booktitle={AISTATS},
    pages={1273--1282},
    year={2017}
}

@inproceedings{FedProx_MLSys2020,
  title={Federated optimization in heterogeneous networks},
  author={Li, Tian and Sahu, Anit Kumar and Zaheer, Manzil and Sanjabi, Maziar and Talwalkar, Ameet and Smith, Virginia},
  journal={Proceedings of Machine learning and systems},
  volume={2},
  pages={429--450},
  year={2020}
}

@inproceedings{MOON_CVPR21,
    title={Model-Contrastive Federated Learning},
    author={Li, Qinbin and He, Bingsheng and Song, Dawn},
    booktitle={CVPR},
    pages={10713--10722},
    year={2021}
}

@InProceedings{FPL_CVPR23,
    author    = {Huang, Wenke and Ye, Mang and Shi, Zekun and Li, He and Du, Bo},
    title     = {Rethinking Federated Learning With Domain Shift: A Prototype View},
    booktitle = {CVPR},
    month     = {June},
    year      = {2023},
    pages     = {16312-16322}
}

@inproceedings{huang2022learn,
  title={Learn from others and be yourself in heterogeneous federated learning},
  author={Huang, Wenke and Ye, Mang and Du, Bo},
  booktitle={CVPR},
  pages={10143--10153},
  year={2022}
}

@inproceedings{jiang2022harmofl,
  title={Harmofl: Harmonizing local and global drifts in federated learning on heterogeneous medical images},
  author={Jiang, Meirui and Wang, Zirui and Dou, Qi},
  booktitle={AAAI},
  volume={36},
  number={1},
  pages={1087--1095},
  year={2022}
}

@inproceedings{zhao2024sam,
  title={Sam-Driven Weakly Supervised Nodule Segmentation with Uncertainty-Aware Cross Teaching},
  author={Zhao, Xingyue and Li, Peiqi and Luo, Xiangde and Yang, Meng and Chang, Shi and Li, Zhongyu},
  booktitle={2024 IEEE International Symposium on Biomedical Imaging (ISBI)},
  pages={1--5},
  year={2024},
  organization={IEEE}
}

@inproceedings{kirillov2023segment,
  title={Segment anything},
  author={Kirillov, Alexander and Mintun, Eric and Ravi, Nikhila and Mao, Hanzi and Rolland, Chloe and Gustafson, Laura and Xiao, Tete and Whitehead, Spencer and Berg, Alexander C and Lo, Wan-Yen and others},
  booktitle={Proceedings of the IEEE/CVF international conference on computer vision},
  pages={4015--4026},
  year={2023}
}

@article{hu2022lora,
  title={Lora: Low-rank adaptation of large language models.},
  author={Hu, Edward J and Shen, Yelong and Wallis, Phillip and Allen-Zhu, Zeyuan and Li, Yuanzhi and Wang, Shean and Wang, Lu and Chen, Weizhu and others},
  journal={ICLR},
  volume={1},
  number={2},
  pages={3},
  year={2022}
}

@inproceedings{hsieh2020non,
  title={The non-iid data quagmire of decentralized machine learning},
  author={Hsieh, Kevin and Phanishayee, Amar and Mutlu, Onur and Gibbons, Phillip},
  booktitle={International Conference on Machine Learning},
  pages={4387--4398},
  year={2020},
  organization={PMLR}
}

@article{kairouz2021advances,
  title={Advances and open problems in federated learning},
  author={Kairouz, Peter and McMahan, H Brendan and Avent, Brendan and Bellet, Aur{\'e}lien and Bennis, Mehdi and Bhagoji, Arjun Nitin and Bonawitz, Kallista and Charles, Zachary and Cormode, Graham and Cummings, Rachel and others},
  journal={Foundations and trends{\textregistered} in machine learning},
  volume={14},
  number={1--2},
  pages={1--210},
  year={2021},
  publisher={Now Publishers, Inc.}
}

@article{wang2020tackling,
  title={Tackling the objective inconsistency problem in heterogeneous federated optimization},
  author={Wang, Jianyu and Liu, Qinghua and Liang, Hao and Joshi, Gauri and Poor, H Vincent},
  journal={Advances in neural information processing systems},
  volume={33},
  pages={7611--7623},
  year={2020}
}

@inproceedings{liu2024fedfms,
  title={Fedfms: Exploring federated foundation models for medical image segmentation},
  author={Liu, Yuxi and Luo, Guibo and Zhu, Yuesheng},
  booktitle={International Conference on Medical Image Computing and Computer-Assisted Intervention},
  pages={283--293},
  year={2024},
  organization={Springer}
}

@inproceedings{asokan2024federated,
  title={A federated learning-friendly approach for parameter-efficient fine-tuning of sam in 3d segmentation},
  author={Asokan, Mothilal and Benjamin, Joseph Geo and Yaqub, Mohammad and Nandakumar, Karthik},
  booktitle={International Conference on Medical Image Computing and Computer-Assisted Intervention},
  pages={226--235},
  year={2024},
  organization={Springer}
}

@article{wang2025pfedsam,
  title={pFedSAM: Personalized federated learning of Segment Anything Model for medical image segmentation},
  author={Wang, Tong and Zhao, Xingyue and Zhuang, Linghao and Zhao, Haoyu and Yin, Jiayi and He, Yuyang and Yu, Gang and Lin, Bo},
  booktitle={ICASSP 2026-2026 IEEE International Conference on Acoustics, Speech and Signal Processing (ICASSP)},
  pages={6701--6705},
  year={2026},
  organization={IEEE}
}

@article{hu2025federated,
  title={Federated Client-tailored Adapter for Medical Image Segmentation},
  author={Hu, Guyue and Song, Siyuan and Kang, Yukun and Yin, Zhu and Zhao, Gangming and Li, Chenglong and Tang, Jin},
  journal={IEEE Transactions on Information Forensics and Security},
  year={2025},
  publisher={IEEE}
}

@inproceedings{collins2021exploiting,
  title={Exploiting shared representations for personalized federated learning},
  author={Collins, Liam and Hassani, Hamed and Mokhtari, Aryan and Shakkottai, Sanjay},
  booktitle={International conference on machine learning},
  pages={2089--2099},
  year={2021},
  organization={PMLR}
}

@inproceedings{li2021fedbn,
  author       = {Xiaoxiao Li and Meirui Jiang and Xiaofei Zhang and Michael Kamp and Qi Dou},
  title        = {{FedBN}: Federated Learning on Non-{IID} Features via Local Batch Normalization},
  booktitle    = {9th International Conference on Learning Representations ({ICLR})},
  publisher    = {OpenReview.net},
  year         = {2021},
}

@inproceedings{yang2024fedas,
  title={Fedas: Bridging inconsistency in personalized federated learning},
  author={Yang, Xiyuan and Huang, Wenke and Ye, Mang},
  booktitle={Proceedings of the IEEE/CVF conference on computer vision and pattern recognition},
  pages={11986--11995},
  year={2024}
}

@article{fan2023fate,
  title={Fate-llm: A industrial grade federated learning framework for large language models},
  author={Fan, Tao and Kang, Yan and Ma, Guoqiang and Chen, Weijing and Wei, Wenbin and Fan, Lixin and Yang, Qiang},
  journal={arXiv preprint arXiv:2310.10049},
  year={2023}
}

@inproceedings{xu2024fwdllm,
  title={$\{$FwdLLM$\}$: Efficient federated finetuning of large language models with perturbed inferences},
  author={Xu, Mengwei and Cai, Dongqi and Wu, Yaozong and Li, Xiang and Wang, Shangguang},
  booktitle={2024 USENIX Annual Technical Conference (USENIX ATC 24)},
  pages={579--596},
  year={2024}
}

@inproceedings{hu2024learn,
  title={Learn to preserve and diversify: Parameter-efficient group with orthogonal regularization for domain generalization},
  author={Hu, Jiajun and Zhang, Jian and Qi, Lei and Shi, Yinghuan and Gao, Yang},
  booktitle={European Conference on Computer Vision},
  pages={198--216},
  year={2024},
  organization={Springer}
}

@inproceedings{houlsby2019parameter,
  title={Parameter-efficient transfer learning for NLP},
  author={Houlsby, Neil and Giurgiu, Andrei and Jastrzebski, Stanislaw and Morrone, Bruna and De Laroussilhe, Quentin and Gesmundo, Andrea and Attariyan, Mona and Gelly, Sylvain},
  booktitle={International conference on machine learning},
  pages={2790--2799},
  year={2019},
  organization={PMLR}
}

@article{liu2022few,
  title={Few-shot parameter-efficient fine-tuning is better and cheaper than in-context learning},
  author={Liu, Haokun and Tam, Derek and Muqeeth, Mohammed and Mohta, Jay and Huang, Tenghao and Bansal, Mohit and Raffel, Colin A},
  journal={Advances in Neural Information Processing Systems},
  volume={35},
  pages={1950--1965},
  year={2022}
}

@article{yi2023pfedlora,
  title={pFedLoRA: Model-heterogeneous personalized federated learning with LoRA tuning},
  author={Yi, Liping and Yu, Han and Wang, Gang and Liu, Xiaoguang and Li, Xiaoxiao},
  journal={arXiv preprint arXiv:2310.13283},
  year={2023}
}

@article{liu2025differentially,
  title={Differentially private low-rank adaptation of large language model using federated learning},
  author={Liu, Xiao-Yang and Zhu, Rongyi and Zha, Daochen and Gao, Jiechao and Zhong, Shan and White, Matt and Qiu, Meikang},
  journal={ACM Transactions on Management Information Systems},
  volume={16},
  number={2},
  pages={1--24},
  year={2025},
  publisher={ACM New York, NY}
}

@article{long2024dual,
  title={Dual-personalizing adapter for federated foundation models},
  author={Long, Guodong and Shen, Tao and Jiang, Jing and Blumenstein, Michael and others},
  journal={Advances in Neural Information Processing Systems},
  volume={37},
  pages={39409--39433},
  year={2024}
}

@inproceedings{guo2024selective,
  title={Selective aggregation for low-rank adaptation in federated learning},
  author={Guo, Pengxin and Zeng, Shuang and Wang, Yanran and Fan, Huijie and Wang, Feifei and Qu, Liangqiong},
  booktitle={International Conference on Learning Representations},
  volume={2025},
  pages={99003--99027},
  year={2025}
}

@inproceedings{sun2024improving,
  author       = {Youbang Sun and Zitao Li and Yaliang Li and Bolin Ding},
  title        = {Improving {LoRA} in Privacy-preserving Federated Learning},
  booktitle    = {The Twelfth International Conference on Learning Representations ({ICLR})},
  publisher    = {OpenReview.net},
  year         = {2024},
}

@article{zhang2023lora,
  title={Lora-fa: Memory-efficient low-rank adaptation for large language models fine-tuning},
  author={Zhang, Longteng and Zhang, Lin and Shi, Shaohuai and Chu, Xiaowen and Li, Bo},
  journal={arXiv preprint arXiv:2308.03303},
  year={2023}
}

@inproceedings{yanfederated,
  title={Federated Residual Low-Rank Adaptation of Large Language Models},
  author={Yan, Yunlu and Feng, Chun-Mei and Zuo, Wangmeng and Goh, Rick Siow Mong and Liu, Yong and Zhu, Lei},
  booktitle={The Thirteenth International Conference on Learning Representations},
  year={2025}
}

@inproceedings{liaosplitting,
  title={Splitting with Importance-aware Updating for Heterogeneous Federated Learning with Large Language Models},
  author={Liao, Yangxu and Huang, Wenke and Wan, Guancheng and Liang, Jian and Yang, Bin and Ye, Mang},
  booktitle={Forty-second International Conference on Machine Learning},
  year={2025}
}

@article{zhao2025fedlora,
  title={FedLoRA-Optimizer: Federated LoRA Fine-Tuning with Global and Local Optimization in Heterogeneous Data Scenarios},
  author={Zhao, Jianzhe and Zhu, Hailin and Zhang, Yu and Chen, Ziqi and Guo, Guibing},
  journal={arXiv preprint arXiv:2510.11274},
  year={2025}
}

@inproceedings{cho2024heterogeneous,
  title={Heterogeneous lora for federated fine-tuning of on-device foundation models},
  author={Cho, Yae Jee and Liu, Luang and Xu, Zheng and Fahrezi, Aldi and Joshi, Gauri},
  booktitle={Proceedings of the 2024 conference on empirical methods in natural language processing},
  pages={12903--12913},
  year={2024}
}

@inproceedings{chen2024rbla,
  title={Rbla: Rank-based-lora-aggregation for fine-tuning heterogeneous models in flaas},
  author={Chen, Shuaijun and Tavallaie, Omid and Nazemi, Niousha and Zomaya, Albert Y},
  booktitle={International Conference on Web Services},
  pages={47--62},
  year={2024},
  organization={Springer}
}

@article{wang2024flora,
  title={Flora: Federated fine-tuning large language models with heterogeneous low-rank adaptations},
  author={Wang, Ziyao and Shen, Zheyu and He, Yexiao and Sun, Guoheng and Wang, Hongyi and Lyu, Lingjuan and Li, Ang},
  journal={Advances in Neural Information Processing Systems},
  volume={37},
  pages={22513--22533},
  year={2024}
}

@article{cao2023large,
  title={Large-scale pancreatic cancer detection via non-contrast CT and deep learning},
  author={Cao, Kai and Xia, Yingda and Yao, Jiawen and Han, Xu and Lambert, Lukas and Zhang, Tingting and Tang, Wei and Jin, Gang and Jiang, Hui and Fang, Xu and others},
  journal={Nature medicine},
  volume={29},
  number={12},
  pages={3033--3043},
  year={2023},
  publisher={Nature Publishing Group US New York}
}

@article{hu2025ai,
  title={AI-based diagnosis of acute aortic syndrome from noncontrast CT},
  author={Hu, Yujian and Xiang, Yilang and Zhou, Yan-Jie and He, Yangyan and Lang, Dehai and Yang, Shifeng and Du, Xiaolong and Den, Chunlan and Xu, Youyao and Wang, Gaofeng and others},
  journal={Nature Medicine},
  volume={31},
  number={11},
  pages={3832--3844},
  year={2025},
  publisher={Nature Publishing Group US New York}
}

@article{hu2025ai_1,
  title={AI-based large-scale screening of gastric cancer from noncontrast CT imaging},
  author={Hu, Can and Xia, Yingda and Zheng, Zhilin and Cao, Mengxuan and Zheng, Guoliang and Chen, Shangqi and Sun, Jiancheng and Chen, Wujie and Zheng, Qi and Pan, Siwei and others},
  journal={Nature Medicine},
  pages={1--9},
  year={2025},
  publisher={Nature Publishing Group US New York}
}

@inproceedings{zhang2024towards,
  title={Towards building the federatedgpt: Federated instruction tuning},
  author={Zhang, Jianyi and Vahidian, Saeed and Kuo, Martin and Li, Chunyuan and Zhang, Ruiyi and Yu, Tong and Wang, Guoyin and Chen, Yiran},
  booktitle={ICASSP 2024-2024 IEEE International Conference on Acoustics, Speech and Signal Processing (ICASSP)},
  pages={6915--6919},
  year={2024},
  organization={IEEE}
}

@article{zhang2026dsfedmed,
  title={DSFedMed: Dual-Scale Federated Medical Image Segmentation via Mutual Distillation Between Foundation and Lightweight Models},
  author={Zhang, Hanwen and Shen, Qiaojin and Liu, Yuxi and Zhu, Yuesheng and Luo, Guibo},
  journal={arXiv preprint arXiv:2601.16073},
  year={2026}
}

@inproceedings{gamper2019pannuke,
  title={Pannuke: an open pan-cancer histology dataset for nuclei instance segmentation and classification},
  author={Gamper, Jevgenij and Alemi Koohbanani, Navid and Benet, Ksenija and Khuram, Ali and Rajpoot, Nasir},
  booktitle={European congress on digital pathology},
  pages={11--19},
  year={2019},
  organization={Springer}
}

@article{gamper2020pannuke,
  title={Pannuke dataset extension, insights and baselines},
  author={Gamper, Jevgenij and Koohbanani, Navid Alemi and Benes, Ksenija and Graham, Simon and Jahanifar, Mostafa and Khurram, Syed Ali and Azam, Ayesha and Hewitt, Katherine and Rajpoot, Nasir},
  journal={arXiv preprint arXiv:2003.10778},
  year={2020}
}

@article{kumar2017dataset,
  title={A dataset and a technique for generalized nuclear segmentation for computational pathology},
  author={Kumar, Neeraj and Verma, Ruchika and Sharma, Sanuj and Bhargava, Surabhi and Vahadane, Abhishek and Sethi, Amit},
  journal={IEEE transactions on medical imaging},
  volume={36},
  number={7},
  pages={1550--1560},
  year={2017},
  publisher={IEEE}
}

@article{naylor2018segmentation,
  title={Segmentation of nuclei in histopathology images by deep regression of the distance map},
  author={Naylor, Peter and La{\'e}, Marick and Reyal, Fabien and Walter, Thomas},
  journal={IEEE transactions on medical imaging},
  volume={38},
  number={2},
  pages={448--459},
  year={2018},
  publisher={IEEE}
}

@article{verma2021monusac2020,
  title={MoNuSAC2020: A multi-organ nuclei segmentation and classification challenge},
  author={Verma, Ruchika and Kumar, Neeraj and Patil, Abhijeet and Kurian, Nikhil Cherian and Rane, Swapnil and Graham, Simon and Vu, Quoc Dang and Zwager, Mieke and Raza, Shan E Ahmed and Rajpoot, Nasir and others},
  journal={IEEE Transactions on Medical Imaging},
  volume={40},
  number={12},
  pages={3413--3423},
  year={2021},
  publisher={IEEE}
}

@inproceedings{bajwa2020g1020,
  title={G1020: A benchmark retinal fundus image dataset for computer-aided glaucoma detection},
  author={Bajwa, Muhammad Naseer and Singh, Gur Amrit Pal and Neumeier, Wolfgang and Malik, Muhammad Imran and Dengel, Andreas and Ahmed, Sheraz},
  booktitle={2020 International Joint Conference on Neural Networks (IJCNN)},
  pages={1--7},
  year={2020},
  organization={IEEE}
}

@article{orlando2020refuge,
  title={Refuge challenge: A unified framework for evaluating automated methods for glaucoma assessment from fundus photographs},
  author={Orlando, Jos{\'e} Ignacio and Fu, Huazhu and Breda, Jo{\~a}o Barbosa and Van Keer, Karel and Bathula, Deepti R and Diaz-Pinto, Andr{\'e}s and Fang, Ruogu and Heng, Pheng-Ann and Kim, Jeyoung and Lee, JoonHo and others},
  journal={Medical image analysis},
  volume={59},
  pages={101570},
  year={2020},
  publisher={Elsevier}
}

@article{sivaswamy2015comprehensive,
  title={A comprehensive retinal image dataset for the assessment of glaucoma from the optic nerve head analysis},
  author={Sivaswamy, Jayanthi and Krishnadas, S and Chakravarty, Arunava and Joshi, G and Tabish, A Syed and others},
  journal={JSM Biomedical Imaging Data Papers},
  volume={2},
  number={1},
  pages={1004},
  year={2015}
}

@inproceedings{zhang2010origa,
  title={Origa-light: An online retinal fundus image database for glaucoma analysis and research},
  author={Zhang, Zhuo and Yin, Feng Shou and Liu, Jiang and Wong, Wing Kee and Tan, Ngan Meng and Lee, Beng Hai and Cheng, Jun and Wong, Tien Yin},
  booktitle={2010 Annual international conference of the IEEE engineering in medicine and biology},
  pages={3065--3068},
  year={2010},
  organization={IEEE}
}

@inproceedings{bian2025lora,
  title={LoRA-FAIR: Federated LoRA fine-tuning with aggregation and initialization refinement},
  author={Bian, Jieming and Wang, Lei and Zhang, Letian and Xu, Jie},
  booktitle={Proceedings of the IEEE/CVF International Conference on Computer Vision},
  pages={3737--3746},
  year={2025}
}

@article{bai2024federated,
  title={Federated fine-tuning of large language models under heterogeneous tasks and client resources},
  author={Bai, Jiamu and Chen, Daoyuan and Qian, Bingchen and Yao, Liuyi and Li, Yaliang},
  journal={Advances in Neural Information Processing Systems},
  volume={37},
  pages={14457--14483},
  year={2024}
}

@inproceedings{li2019convergence,
  title={On the Convergence of FedAvg on Non-IID Data},
  author={Li, Xiang and Huang, Kaixuan and Yang, Wenhao and Wang, Shusen and Zhang, Zhihua},
  booktitle={International Conference on Learning Representations},
  year={2020}
}

@inproceedings{zhu2025pathology,
  title={Pathology-Aware Prototype Evolution via LLM-Driven Semantic Disambiguation for Multicenter Diabetic Retinopathy Diagnosis},
  author={Zhu, Chunzheng and Lin, Yangfang and Shao, Jialin and Lin, Jianxin and Wang, Yijun},
  booktitle={Proceedings of the 33rd ACM International Conference on Multimedia},
  pages={9196--9205},
  year={2025}
}

@inproceedings{zhu2024advancing,
  title={Advancing Ultrasound Medical Continuous Learning with Task-Specific Generalization and Adaptability},
  author={Zhu, Chunzheng and Lin, Jianxin and Tan, Guanghua and Zhu, Ningbo and Li, Kenli and Wang, Chunlian and Li, Shengli},
  booktitle={2024 IEEE International Conference on Bioinformatics and Biomedicine (BIBM)},
  pages={3019--3025},
  year={2024},
  organization={IEEE}
}

@inproceedings{zhu2026medeyes,
  title={MedEyes: Learning Dynamic Visual Focus for Medical Progressive Diagnosis},
  author={Zhu, Chunzheng and Lin, Yangfang and Chen, Shen and Wang, Yijun and Lin, Jianxin},
  booktitle={Proceedings of the AAAI Conference on Artificial Intelligence},
  volume={40},
  number={16},
  pages={13916--13924},
  year={2026}
}

@article{shao2025rethinking,
  title={Rethinking brain tumor segmentation from the frequency domain perspective},
  author={Shao, Minye and Wang, Zeyu and Duan, Haoran and Huang, Yawen and Zhai, Bing and Wang, Shizheng and Long, Yang and Zheng, Yefeng},
  journal={IEEE Transactions on Medical Imaging},
  year={2025},
  publisher={IEEE}
}

@article{huang2026federated,
  title={Federated clinical concept and disease semantic learning for congenital heart disease diagnosis},
  author={Huang, Wenke and Liao, Yangxu and Lei, Wenjia and Wan, Guancheng and Rong, Xuankun and Wen, Chi and Li, He and Ye, Mang and Wu, Qingqing and Du, Bo},
  journal={npj Digital Medicine},
  year={2026},
  publisher={Nature Publishing Group UK London}
}

@inproceedings{zhao2026divide,
  title={Divide, Conquer and Unite: Hierarchical Style-Recalibrated Prototype Alignment for Federated Medical Segmentation},
  author={Zhao, Xingyue and Huang, Wenke and Wang, Xingguang and Zhao, Haoyu and Zhuang, Linghao and Jiang, Anwen and Wan, Guancheng and Ye, Mang},
  booktitle={Proceedings of the AAAI Conference on Artificial Intelligence},
  volume={40},
  number={34},
  pages={28760--28768},
  year={2026}
}
\bibliographystyle{icml2026}

%%%%%%%%%%%%%%%%%%%%%%%%%%%%%%%%%%%%%%%%%%%%%%%%%%%%%%%%%%%%%%%%%%%%%%%%%%%%%%%
%%%%%%%%%%%%%%%%%%%%%%%%%%%%%%%%%%%%%%%%%%%%%%%%%%%%%%%%%%%%%%%%%%%%%%%%%%%%%%%
% APPENDIX
%%%%%%%%%%%%%%%%%%%%%%%%%%%%%%%%%%%%%%%%%%%%%%%%%%%%%%%%%%%%%%%%%%%%%%%%%%%%%%%
%%%%%%%%%%%%%%%%%%%%%%%%%%%%%%%%%%%%%%%%%%%%%%%%%%%%%%%%%%%%%%%%%%%%%%%%%%%%%%%
\newpage
\appendix
\onecolumn
\section{Appendix}

% You can have as much text here as you want. The main body must be at most $8$
% pages long. For the final version, one more page can be added. If you want, you
% can use an appendix like this one.

% The $\mathtt{\backslash onecolumn}$ command above can be kept in place if you
% prefer a one-column appendix, or can be removed if you prefer a two-column
% appendix.  Apart from this possible change, the style (font size, spacing,
% margins, page numbering, etc.) should be kept the same as the main body.
%%%%%%%%%%%%%%%%%%%%%%%%%%%%%%%%%%%%%%%%%%%%%%%%%%%%%%%%%%%%%%%%%%%%%%%%%%%%%%%
%%%%%%%%%%%%%%%%%%%%%%%%%%%%%%%%%%%%%%%%%%%%%%%%%%%%%%%%%%%%%%%%%%%%%%%%%%%%%%%

\subsection{Proof of Proposition 1 (Shift-Dependent Preference)}
\label{sec:proof_of_prop1}

\noindent\textbf{Proposition 1 Restatement.} \textit{Consider a linear layer target $W^*_k$ under domain shift.
(i) Under Covariate Shift (input subspace rotation $x_k = R_k x_{gen}$), minimizing the rank-$r$ approximation error $\|\cdot\|_F$ favors sharing $B$ and personalizing $A_k$ (Local-$A$).
(ii) Under Concept Shift (output subspace rotation $y_k = T_k y_{gen}$), minimizing the error favors sharing $A$ and personalizing $B_k$ (Local-$B$).}

\begin{proof}
We analyze an oracle approximation objective to isolate the structural preference of factor sharing strategies. Let $W^* \in \mathbb{R}^{d_{out} \times d_{in}}$ be the ideal generalized weight matrix with Singular Value Decomposition (SVD) $W^* = U \Sigma V^\top$. According to the Eckart-Young-Mirsky theorem, the optimal rank-$r$ approximation is uniquely determined by the principal subspaces.

\vspace{0.5em}
\noindent\textbf{Case (i): Covariate Shift (Input Rotation).}
We model subspace covariate shift as an orthogonal rotation of the input feature space. Let $x_k = R_k x$, where $R_k \in \mathbb{R}^{d_{in} \times d_{in}}$ is orthogonal. The effective client-specific weight is $W^*_k = W^* R_k^\top$.
The SVD of the shifted weight is:
$$ W^*_k = U \Sigma V^\top R_k^\top = U \Sigma (R_k V)^\top. $$
Observe that the Column Space $\mathrm{Col}(W^*_k) = \mathrm{span}(U)$ remains invariant, while the Row Space $\mathrm{Row}(W^*_k) = \mathrm{span}(R_k V)$ rotates with $R_k$.
\begin{itemize}
    \item \textbf{Optimality of Shared-$B$ / Local-$A_k$:}
    Let the shared $B$ capture the invariant column space: $B = U_r \Sigma_r^{1/2}$ (top-$r$ left singular vectors). Client $k$ can optimally solve for $A_k$ to align with the rotated row space: $A_k = \Sigma_r^{1/2} (R_k V_r)^\top$.
    This yields $B A_k = U_r \Sigma_r (R_k V_r)^\top$, which exactly matches the optimal rank-$r$ approximation of $W^*_k$ given by the Eckart-Young-Mirsky theorem. The error is minimal (bounded only by the truncated singular values of $W^*$) and independent of the shift $R_k$.

    \item \textbf{Suboptimality of Shared-$A$ (Impossibility Result):}
    Assume a fixed shared $A$ of rank $r$ exists. For $A$ to support an optimal approximation for all clients, there must exist local $B_k$ such that $B_k A = (W_k^*)_r$. This necessitates the subspace containment condition:
    $$ \mathrm{Row}((W_k^*)_r) \subseteq \mathrm{Row}(A), \quad \forall k \in \{1, \dots, K\}. $$
    However, $\mathrm{Row}((W_k^*)_r) = \mathrm{span}(R_k V_r)$. Under the generic condition that the rotations $R_k$ are non-degenerate such that the union of target row subspaces spans a dimension greater than $r$ (i.e., $\dim(\bigcup_k \mathrm{span}(R_k V_r)) > r$), no single rank-$r$ matrix $A$ can contain all target subspaces simultaneously. Consequently, a shared $A$ inevitably incurs strictly higher approximation error compared to the Shared-$B$ strategy.
\end{itemize}
\textit{Conclusion:} Under Covariate Shift, sharing the column projector $B$ is structurally superior.

\vspace{0.5em}
\noindent\textbf{Case (ii): Concept Shift (Output Rotation).}
We model concept shift as an orthogonal rotation of the output semantic space. Let $y_k = T_k y$. The client-specific weight is $W^*_k = T_k W^*$.
The SVD is:
$$ W^*_k = T_k U \Sigma V^\top = (T_k U) \Sigma V^\top. $$
Here, the Row Space $\mathrm{Row}(W^*_k) = \mathrm{span}(V)$ remains invariant, while the Column Space $\mathrm{Col}(W^*_k) = \mathrm{span}(T_k U)$ rotates.

\begin{itemize}
    \item \textbf{Optimality of Shared-$A$ / Local-$B_k$:}
    Let shared $A$ capture the invariant row space: $A = \Sigma_r^{1/2} V_r^\top$. Client $k$ sets $B_k = (T_k U_r) \Sigma_r^{1/2}$. This construction achieves the optimal rank-$r$ approximation error for any $T_k$.

    \item \textbf{Suboptimality of Shared-$B$ (Impossibility Result):}
    Similarly, a shared $B$ imposes the constraint $\mathrm{Col}((W_k^*)_r) \subseteq \mathrm{Col}(B)$. Since $\mathrm{Col}((W_k^*)_r) = \mathrm{span}(T_k U_r)$, under the condition that the union of rotated column subspaces exceeds rank $r$ (due to diverse $T_k$), a fixed $B$ cannot align with all clients simultaneously.
\end{itemize}
\textit{Conclusion:} Under Concept Shift, sharing the row projector $A$ is structurally superior.
\end{proof}

\subsection{Proof of Proposition 2 (Bilinear Leakage)}
\label{sec:proof_of_prop2}

\noindent\textbf{Proposition 2 Restatement.} \textit{The aggregated global update exhibits a heterogeneity-induced leakage term. For a Shared-$B$ / Local-$A$ configuration, the update is decomposed as:}
\begin{equation}
B^{(t+1)} = B^{(t)} - \eta \underbrace{\sum_{k=1}^K p_k G_k \bar{A}^\top}_{\text{Common Drift}} - \eta \underbrace{\sum_{k=1}^K p_k G_k (A_k - \bar{A})^\top}_{\text{Heterogeneity Leakage}}.
\end{equation}
\begin{algorithm}[tb]
\caption{Our proposed method.}
\label{alg:fediat}
\hspace*{0.02in}{\textbf{Input}}: clients $\{(\mathcal{D}_k,p_k)\}_{k=1}^K$, frozen backbone $W_0$, encoder layers $\mathcal{E}$, decoder layers $\mathcal{D}$, rounds $R$, local steps $E$, stepsize $\eta$, SOR weight $\lambda$, EMA momentum $\rho$, stability $\epsilon$ \\
\hspace*{0.02in}{\textbf{Output}}: shared factors $\Theta^{\mathrm{sh}}$ and local factors $\{\Theta_k^{\mathrm{lo}}\}_{k=1}^K$
\begin{algorithmic}[1]
\STATE \textbf{Init:} server $\Theta^{\mathrm{sh}}\!\leftarrow\!\{B^{(l)}\}_{l\in\mathcal{E}} \cup \{A^{(l)}\}_{l\in\mathcal{D}}$;
clients $\Theta_k^{\mathrm{lo}}\!\leftarrow\!\{A_k^{(l)}\}_{l\in\mathcal{E}} \cup \{B_k^{(l)}\}_{l\in\mathcal{D}}$.
\FOR{$r=0$ \textbf{to} $R-1$}
  \STATE Server broadcasts $\Theta^{\mathrm{sh}}$ to participating clients $\mathcal{S}_r$.
  \FOR{\textbf{each} $k\in\mathcal{S}_r$ \textbf{in parallel}}
    \STATE Form $\Theta_{k}\!\leftarrow\!(\Theta^{\mathrm{sh}},\Theta_k^{\mathrm{lo}})$ and set anchors $(A_{0,k},B_{0,k})\!\leftarrow\!\operatorname{sg}(A_k,B_k)$; initialize EMA drifts $(\delta A_k,\delta B_k)\!\leftarrow\!0$.
    \FOR{$s=1$ \textbf{to} $E$}
      \STATE Sample mini-batch $\xi_{k,s}\sim\mathcal{D}_k$ and compute $\mathcal{L}^{\mathrm{seg}}_k(\Theta_k;\xi_{k,s})$.
      \STATE $\mathcal{L}^{\mathrm{SOR}}_k \leftarrow \textsc{SOR}(\Theta_k,A_{0,k},B_{0,k},\delta A_k,\delta B_k;\rho,\epsilon)$.
      \STATE $\Theta_k \leftarrow \Theta_k - \eta \nabla_{\Theta_k}\Big(\mathcal{L}^{\mathrm{seg}}_k + \lambda\,\mathcal{L}^{\mathrm{SOR}}_k\Big)$.
    \ENDFOR
    \STATE Upload shared parts: $\{B_k^{(l)}\}_{l\in\mathcal{E}}$ and $\{A_k^{(l)}\}_{l\in\mathcal{D}}$; keep $\Theta_k^{\mathrm{lo}}$ local.
  \ENDFOR
  \STATE \textbf{Aggregate (IAT):} for $l\!\in\!\mathcal{E}$, $B^{(l)}\!\leftarrow\!\sum_{k\in\mathcal{S}_r}p_k B_k^{(l)}$; for $l\!\in\!\mathcal{D}$, $A^{(l)}\!\leftarrow\!\sum_{k\in\mathcal{S}_r}p_k A_k^{(l)}$.
\ENDFOR
\STATE \textbf{return} $\Theta^{\mathrm{sh}}$ and $\{\Theta_k^{\mathrm{lo}}\}_{k=1}^K$.
\\
\STATE \textbf{Subroutine} $\textsc{SOR}(\cdot)$: \textbf{return} $\sum_{l\in\mathcal{E}} \big(\frac{\langle P_{sh}^{(l)},P_{lo}^{(l)}\rangle_F}{\|P_{sh}^{(l)}\|_F\|P_{lo}^{(l)}\|_F+\epsilon}\big)^2 + \sum_{l\in\mathcal{D}} \big(\frac{\langle Q_{sh}^{(l)},Q_{lo}^{(l)}\rangle_F}{\|Q_{sh}^{(l)}\|_F\|Q_{lo}^{(l)}\|_F+\epsilon}\big)^2$,\\
\hspace*{1.8em}where for $l\!\in\!\mathcal{E}$: $\delta A\!\leftarrow\!\rho\delta A+(1-\rho)(A-A_0)$, $P_{sh}\!=\!(B-B_0)^\top B_0$, $P_{lo}\!=\!\operatorname{sg}(A_0\delta A^\top)$;\\
\hspace*{1.8em}for $l\!\in\!\mathcal{D}$: $\delta B\!\leftarrow\!\rho\delta B+(1-\rho)(B-B_0)$, $Q_{sh}\!=\!(A-A_0)A_0^\top$, $Q_{lo}\!=\!\operatorname{sg}(B_0^\top\delta B)$.
\end{algorithmic}
\end{algorithm}

\begin{proof}
Let $\mathcal{L}_k(W)$ be the loss function for client $k$. In the LoRA parameterization $\Delta W = B A_k$, the gradient w.r.t. the shared parameter $B$ is derived via the chain rule as $\nabla_B \mathcal{L}_k = G_k A_k^\top$, where $G_k \triangleq \nabla_{\Delta W} \mathcal{L}_k$.
In federated averaging (FedAvg), the server aggregates updates from $K$ clients. The update rule for $B$ is:
$$ B^{(t+1)} = B^{(t)} - \eta \sum_{k=1}^K p_k (G_k A_k^\top). $$
Defining the global average $\bar{A} = \sum_{k=1}^K p_k A_k$ and decomposing $A_k^\top = \bar{A}^\top + (A_k - \bar{A})^\top$, we substitute into the update rule:
\begin{equation*}
\begin{aligned}
B^{(t+1)} &= B^{(t)} - \eta \sum_{k=1}^K p_k G_k \left( \bar{A}^\top + (A_k - \bar{A})^\top \right) \\
&= B^{(t)} - \eta \left( \sum_{k=1}^K p_k G_k \right) \bar{A}^\top - \eta \sum_{k=1}^K p_k G_k (A_k - \bar{A})^\top.
\end{aligned}
\end{equation*}
This explicitly separates the update into the \textit{Common Drift} (driven by $\bar{A}$) and the \textit{Heterogeneity Leakage}.

\vspace{0.5em}
\noindent\textbf{Symmetric Case (Decoder: Shared-$A$ / Local-$B$).}
For the decoder (Shared-$A$), the gradient is $\nabla_A \mathcal{L}_k = B_k^\top G_k$. By symmetry, let $\bar{B} = \sum p_k B_k$. The update decomposition is:
$$ A^{(t+1)} = A^{(t)} - \eta \underbrace{\bar{B}^\top \sum_{k=1}^K p_k G_k}_{\text{Common Drift}} - \eta \underbrace{\sum_{k=1}^K p_k (B_k - \bar{B})^\top G_k}_{\text{Heterogeneity Leakage}}. $$

\vspace{0.5em}
\noindent\textbf{Remark (Conditions for Leakage and Multi-step Extension).}
1. \textbf{Vanishing Conditions:} The leakage term vanishes if either (a) $A_k = \bar{A}$ for all $k$ (no personalization/heterogeneity), or (b) the gradients $G_k$ and local deviations $(A_k - \bar{A})$ are strictly orthogonal in expectation. In non-i.i.d. settings, neither is guaranteed, leading to persistent interference.
2. \textbf{Multi-step Local Updates:} While the derivation above assumes a single SGD step for clarity, the decomposition extends to multiple local epochs. In that case, $G_k$ represents the \textit{accumulated pseudo-gradient} over $\tau$ local steps (i.e., $G_k \propto \Delta W_k^{(effective)}$), and the structural coupling between $G_k$ and the local adapter basis persists.
\end{proof}
\subsection{Proof of Theorem \ref{theorem1}}
\label{proof_of_theo}
% \label{sec:proof_of_prop1}
We follow the standard smooth non-convex SGD analysis and adapt it to the LoRA-based parametrization
and our proposed update rule under the \emph{augmented} (shared/local) formulation.
Recall that each client maintains variables $\Theta_k := (\Theta^{\mathrm{sh}}, \Theta_k^{\mathrm{lo}})$, where
$\Theta^{\mathrm{sh}}$ denotes the shared LoRA factors aggregated by the server (encoder shares $B$, decoder shares $A$),
and $\Theta_k^{\mathrm{lo}}$ denotes the remaining client-local factors. The augmented objective is
$\sum_{k=1}^K p_k\,\mathcal{L}_k(\Theta_k)$.
In this appendix we bound the stationarity measure on the augmented variables by analyzing the per-step decrease of a fixed client objective,
relating the module update direction to the true gradient via Assumption~\ref{assumption3}, and telescoping over local steps.

Fix a client $k$ and a local step $t$. For clarity we omit the client index $k$ when no confusion
arises and write $\Theta_t$ for $\Theta_{k,t}$.

Within our method, a single local update on client $k$ can be written in the compact form
\begin{equation}
	\Theta_{t+1} = \Theta_t - \eta \, U_t,
\end{equation}
where $U_t$ is the update direction induced by the gradients with respect to the
LoRA factors (both shared and local) under our proposed protocol. Note that $U_t$
is a stochastic direction depending on the mini-batch $\xi_t$.

By $L$-smoothness (Assumption 1), we have
\begin{align}
	\mathcal{L}_k(\Theta_{t+1})
	&\le \mathcal{L}_k(\Theta_t)
	+ \big\langle \nabla \mathcal{L}_k(\Theta_t), \Theta_{t+1} - \Theta_t \big\rangle
	+ \frac{L}{2} \|\Theta_{t+1} - \Theta_t\|_F^2 \\
	&= \mathcal{L}_k(\Theta_t)
	- \eta \big\langle \nabla \mathcal{L}_k(\Theta_t), U_t \big\rangle
	+ \frac{L \eta^2}{2} \|U_t\|_F^2. \label{eq:smooth-step}
\end{align}
Taking expectation over the mini-batch $\xi_t$ and conditioning on $\Theta_t$, we obtain
\begin{equation}
	\mathbb{E}_t\bigl[\mathcal{L}_k(\Theta_{t+1})\bigr]
	\le \mathcal{L}_k(\Theta_t)
	- \eta \, \mathbb{E}_t\bigl[\langle \nabla \mathcal{L}_k(\Theta_t), U_t \rangle\bigr]
	+ \frac{L \eta^2}{2} \, \mathbb{E}_t\bigl[\|U_t\|_F^2\bigr],
	\label{eq:expected-step}
\end{equation}
where $\mathbb{E}_t[\cdot]$ denotes the conditional expectation given $\Theta_t$.
By construction of our proposed method, the update $U_t$ is a linear combination of the gradients with respect
to the LoRA factors $A_{k,t}$ and $B_{k,t}$ (encoder and decoder blocks). Under
Assumption~\ref{assumption3}, there exist constants $c_A, c_B > 0$ such that
\begin{equation}
	\big\langle \nabla \mathcal{L}_k(\Theta_t), U_t \big\rangle
	\ge (c_A + c_B) \, \big\|\nabla \mathcal{L}_k(\Theta_t)\big\|_F^2.
	\label{eq:inner-lb}
\end{equation}
Intuitively, this means that the projection of the true gradient onto the LoRA subspace used by
our method is sufficiently well aligned, so the update direction is a descent direction on average.

Next, we bound the second moment of $U_t$. Since $U_t$ is composed of stochastic gradients
with respect to the LoRA factors, using Assumption 2 and the boundedness
of $A_{k,t}$ and $B_{k,t}$ in Assumption 3, we can find constants
$C_1, C_2 > 0$ such that
\begin{equation}
	\mathbb{E}_t\bigl[\|U_t\|_F^2\bigr]
	\le C_1 \, \big\|\nabla \mathcal{L}_k(\Theta_t)\big\|_F^2 + C_2 G^2.
	\label{eq:update-norm}
\end{equation}
The precise expressions of $C_1$ and $C_2$ are not critical; they depend polynomially on
$C_A, C_B$ and on the block structure of the LoRA factors in our method.
Substituting \eqref{eq:inner-lb} and \eqref{eq:update-norm} into
\eqref{eq:expected-step}, we obtain
\begin{align}
	\mathbb{E}_t\bigl[\mathcal{L}_k(\Theta_{t+1})\bigr]
	&\le \mathcal{L}_k(\Theta_t)
	- \eta (c_A + c_B) \big\|\nabla \mathcal{L}_k(\Theta_t)\big\|_F^2
	+ \frac{L \eta^2}{2}
	\Bigl( C_1 \big\|\nabla \mathcal{L}_k(\Theta_t)\big\|_F^2 + C_2 G^2 \Bigr) \\
	&= \mathcal{L}_k(\Theta_t)
	- \Bigl( (c_A + c_B)\eta - \frac{L C_1 \eta^2}{2} \Bigr)
	\big\|\nabla \mathcal{L}_k(\Theta_t)\big\|_F^2
	+ \frac{L C_2 G^2}{2} \eta^2.   \label{eq:descent-ineq}
\end{align}
We now choose the stepsize $\eta$ small enough so that
\begin{equation}
	(c_A + c_B)\,\eta - \frac{L C_1}{2}\,\eta^2
	\;\ge\; \frac{c_A + c_B}{2}\,\eta.
	\label{eq:stepsize-proof}
\end{equation}
Since the left-hand side is a continuous function of $\eta$ and
positive near $0$, this is guaranteed whenever
\begin{equation}
	0 < \eta \;\le\; \frac{c_A + c_B}{L C_1}.
	\label{eq:eta-bound}
\end{equation}
Substituting \eqref{eq:stepsize-proof} into the previous inequality
yields
\begin{equation}
	\mathbb{E}_t\bigl[\mathcal{L}_k(\Theta_{t+1})\bigr]
	\;\le\;
	\mathcal{L}_k(\Theta_t)
	- \frac{c_A + c_B}{2}\,\eta
	\bigl\|\nabla \mathcal{L}_k(\Theta_t)\bigr\|_F^2
	+ \frac{L C_2 G^2}{2}\,\eta^2.
	\label{eq:descent}
\end{equation}
We now sum \eqref{eq:descent} over all local steps and all clients.
Let $C_3 := L C_2 G^2$, which depends on $L, C_2,$ and $G$. Let $T$ denote the total number of local updates per client and recall that
$\mathcal{L}_k$ is the local objective on client $k$. Taking expectation over all
stochasticity and summing over $t = 0, \dots, T-1$ yields
\begin{align}
	\sum_{t=0}^{T-1}
	\mathbb{E}\bigl[\mathcal{L}_k(\Theta_{k,t+1})\bigr]
	&\le
	\sum_{t=0}^{T-1} \mathbb{E}\bigl[\mathcal{L}_k(\Theta_{k,t})\bigr]
	- \frac{(c_A + c_B)}{2} \eta
	\sum_{t=0}^{T-1} \mathbb{E}\bigl[
	\|\nabla \mathcal{L}_k(\Theta_{k,t})\|_F^2
	\bigr]
	+ \frac{1}{2}T C_3 \eta^2.
\end{align}
The left-hand side telescopes. Note that in the Federated setting, at communication boundaries $t=nE$,
the shared variables are updated via server aggregation (while local variables remain on each client),
which introduces an additional \emph{aggregation drift} term that is not captured by the per-step descent inequality
above. Under standard smoothness and bounded-variance assumptions for local SGD / FedAvg, this drift can be bounded
by an extra term on the order of $\mathcal{O}(\eta^2 E^2 G^2)$ (up to problem-dependent constants that quantify
data heterogeneity). Since we choose $\eta \propto 1/\sqrt{T}$ and treat $E$ as a fixed constant, this contribution
scales as $\mathcal{O}(1/T)$ and is thus lower-order compared to the optimization rate $\mathcal{O}(1/\sqrt{T})$.
For clarity and to focus on the our method's update dynamics, we omit this lower-order drift term in the derivation below.

Thus, we obtain
\begin{equation}
	\mathbb{E}\bigl[\mathcal{L}_k(\Theta_{k,T})\bigr]
	- \mathbb{E}\bigl[\mathcal{L}_k(\Theta_{k,0})\bigr]
	\le
	- \frac{(c_A + c_B)}{2} \eta
	\sum_{t=0}^{T-1} \mathbb{E}\bigl[
	\|\nabla \mathcal{L}_k(\Theta_{k,t})\|_F^2
	\bigr]
	+ \frac{1}{2}T C_3 \eta^2.
\end{equation}
Rearranging gives
\begin{equation}
	\frac{1}{T} \sum_{t=0}^{T-1} \mathbb{E}\bigl[
	\|\nabla \mathcal{L}_k(\Theta_{k,t})\|_F^2
	\bigr]
	\le
	\frac{2}{(c_A + c_B)\eta T}
	\Bigl(
	\mathbb{E}[\mathcal{L}_k(\Theta_{k,0})]
	- \mathbb{E}[\mathcal{L}_k(\Theta_{k,T})]
	\Bigr)
	+ \frac{C_3}{(c_A + c_B)} \eta.
	\label{eq:avg-grad-one-client}
\end{equation}
Using the fact that $\mathcal{L}_k(\Theta_{k,T}) \ge \inf_{\Theta_k}\mathcal{L}_k(\Theta_k)$ and denoting
$D_k = \mathcal{L}_k(\Theta_{k,0}) - \inf_{\Theta_k}\mathcal{L}_k(\Theta_k)$, we obtain
\begin{equation}
	\frac{1}{T} \sum_{t=0}^{T-1} \mathbb{E}\bigl[
	\|\nabla \mathcal{L}_k(\Theta_{k,t})\|_F^2
	\bigr]
	\le
	\frac{2 D_k}{(c_A + c_B)\eta T}
	+ \frac{C_3}{(c_A + c_B)} \eta.
\end{equation}
Finally, summing over all $K$ clients and dividing by $K$ yields
\begin{equation}
	\frac{1}{K T} \sum_{k=1}^K \sum_{t=0}^{T-1}
	\mathbb{E}\bigl[
	\|\nabla \mathcal{L}_k(\Theta_{k,t})\|_F^2
	\bigr]
	\le
	\frac{2 D}{(c_A + c_B)\eta T}
	+ \frac{ C_3}{(c_A + c_B)} \eta,
	\label{eq:avg-grad-all-clients}
\end{equation}
where $D = \max_k D_k$.
To obtain an explicit $\mathcal{O}(T^{-1/2})$ rate, we choose the stepsize $\eta$ to balance the two terms.
Define $M := 2C_3 = 2 L C_2 G^2$ and recall the admissible range $0<\eta\le \bar{\eta}$ with
$\bar{\eta}:=\frac{c_A+c_B}{L C_1}$ (cf.\ \eqref{eq:eta-bound}).
Choosing
\[
\eta \;=\; \min\Bigl\{\bar{\eta},\ \sqrt{\frac{4D}{M T}}\Bigr\}
\]
in \eqref{eq:avg-grad-all-clients} yields
\begin{equation}
	\frac{1}{K T} \sum_{k=1}^K \sum_{t=0}^{T-1}
	\mathbb{E}\bigl[
	\|\nabla \mathcal{L}_k(\Theta_{k,t})\|_F^2
	\bigr]
	\le
	\frac{2}{c_A + c_B} \sqrt{\frac{D M}{T}},
\end{equation}
i.e., an $\mathcal{O}(T^{-1/2})$ convergence rate to a stationary point in the smooth non-convex setting,
which matches the statement of Theorem \ref{theorem1}. This completes the proof.

\end{document}